\documentclass{article}

\pdfoutput=1

\usepackage{microtype}
\usepackage{graphicx}
\usepackage{subcaption}
\usepackage{booktabs} % for professional tables
\usepackage{stfloats}
\usepackage{multirow}
\usepackage{hyperref}

\usepackage[accepted]{icml2026}

\usepackage{amsmath}
\usepackage{amssymb}
\usepackage{mathtools}
\usepackage{amsthm}

\usepackage[capitalize,noabbrev]{cleveref}

\theoremstyle{plain}
\newtheorem{theorem}{Theorem}[section]

\theoremstyle{definition}

\theoremstyle{remark}

\usepackage[disable,textsize=tiny]{todonotes}

\usepackage[utf8]{inputenc}
\usepackage{amsmath, amssymb, amsthm}
\usepackage{graphicx}
\usepackage{hyperref}
\usepackage{xcolor}
\usepackage{tcolorbox}
\tcbuselibrary{skins,breakable}
\usepackage{pifont} % for \ding symbols

\newtcolorbox{examplebox}[1]{
  colback=gray!5!white,
  colframe=gray!50!black,
  fonttitle=\bfseries,
  title={#1},
  breakable,
  enhanced,
  left=2mm, right=2mm, top=1mm, bottom=1mm
}
\newtcolorbox{failurebox}[1][]{
  colback=red!5!white,
  colframe=red!50!black,
  fonttitle=\bfseries,
  title=#1,
  breakable,
  enhanced,
  left=2mm, right=2mm, top=1mm, bottom=1mm
}

\newtcolorbox{pathbox}{
  colback=gray!5!white,
  colframe=gray!50!black,
  boxrule=0.5pt,
  left=2mm, right=2mm, top=1mm, bottom=1mm,
  fontupper=\small\ttfamily
}
\usepackage{algorithm}
\usepackage{colortbl}
\usepackage{xcolor}

\icmltitlerunning{RSF-GLLM: Bridging the Semantic Gap in Multi-Hop Knowledge Graph QA}

\begin{document}

\twocolumn[
  \icmltitle{RSF-GLLM: Bridging the Semantic Gap in Multi-Hop Knowledge Graph QA via Recurrent Soft-Flow and Decoupled LLM Generation}

  % It is OKAY to include author information, even for blind submissions: the
  % style file will automatically remove it for you unless you've provided
  % the [accepted] option to the icml2026 package.

  % List of affiliations: The first argument should be a (short) identifier you
  % will use later to specify author affiliations Academic affiliations
  % should list Department, University, City, Region, Country Industry
  % affiliations should list Company, City, Region, Country

  % You can specify symbols, otherwise they are numbered in order. Ideally, you
  % should not use this facility. Affiliations will be numbered in order of
  % appearance and this is the preferred way.
  \icmlsetsymbol{equal}{*}

  \begin{icmlauthorlist}
    \icmlauthor{Sambaran Bandyopadhyay}{adres}
    \icmlauthor{Ananth Muppidi}{adsys}
    % %\icmlauthor{}{sch}
    % \icmlauthor{Firstname8 Lastname8}{sch}
    % \icmlauthor{Firstname8 Lastname8}{yyy,comp}
    % %\icmlauthor{}{sch}
    % %\icmlauthor{}{sch}
  \end{icmlauthorlist}

  \icmlaffiliation{adres}{Adobe Research}
  \icmlaffiliation{adsys}{Adobe Systems}
%   \icmlaffiliation{sch}{School of ZZZ, Institute of WWW, Location, Country}

  \icmlcorrespondingauthor{Sambaran Bandyopadhyay}{samb.bandyo@gmail.com}
%   \icmlcorrespondingauthor{Ananth Muppidi}{ananth.muppidi@gmail.com}

  % You may provide any keywords that you find helpful for describing your
  % paper; these are used to populate the "keywords" metadata in the PDF but
  % will not be shown in the document
  \icmlkeywords{Machine Learning, ICML}

  \vskip 0.3in
]

% this must go after the closing bracket ] following \twocolumn[ ...

% This command actually creates the footnote in the first column listing the
% affiliations and the copyright notice. The command takes one argument, which
% is text to display at the start of the footnote. The \icmlEqualContribution
% command is standard text for equal contribution. Remove it (just {}) if you
% do not need this facility.

% Use ONE of the following lines. DO NOT remove the command.
% If you have no special notice, KEEP empty braces:
\printAffiliationsAndNotice{}  % no special notice (required even if empty)
% Or, if applicable, use the standard equal contribution text:
% \printAffiliationsAndNotice{\icmlEqualContribution}

\begin{abstract}
Multi-hop Question Answering over Knowledge Graphs faces a critical challenge: traditional retrieve-then-read pipelines break differentiability, preventing the retriever from learning to bridge the semantic gap where intermediate nodes lack lexical overlap with the query. To address this, we propose RSF-GLLM, a framework decoupling differentiable graph reasoning from answer generation. Our Recurrent Soft-Flow (RSF) module employs a GRU-guided query updater to propagate continuous relevance scores, utilizing a dynamic gating mechanism to traverse semantically dissimilar bridge nodes via structural cues. We introduce flow sparsity regularization to theoretically guarantee convergence from soft probabilities to discrete reasoning paths. These paths are extracted and textualized to fine-tune a Large Language Model (LLM), ensuring generation is grounded in factual topology. Experiments on WebQSP and CWQ demonstrate that RSF-GLLM achieves competitive performance with superior inference efficiency compared to LLM based computationally expensive approaches.
\end{abstract}

\section{Introduction}
\label{sec:intro}

Reasoning over structured knowledge bases to answer complex, multi-hop queries is a fundamental challenge in artificial intelligence, essential for applications requiring high factual precision such as biomedical discovery and financial auditing \cite{wang2021survey}. While Large Language Models (LLMs) have demonstrated exceptional fluency in open-domain tasks \cite{brown2020language}, they frequently exhibit hallucinations and unfaithful reasoning when operating over long-tail knowledge or complex logical chains \cite{ji2023survey, nandy2025language}. Consequently, Knowledge Graphs (KGs) remain indispensable for grounding generation in verifiable facts \cite{hogan2021knowledge, nandy2025language}. However, effective retrieval over KGs is impeded by the \textit{semantic gap}. Consider the query: \textit{``What awards did the director of Inception win?''}. To answer this, a system must traverse from the topic entity \textit{``Inception''} to the intermediate node \textit{``Christopher Nolan''} (the director), and finally to \textit{``Academy Award''}. Crucially, the intermediate bridge entity \textit{``Christopher Nolan''} shares no lexical overlap with the query terms ``awards'' or ``win'', rendering standard similarity-based traversal ineffective \cite{sun2018open, xiong2017deeppath}.

Current approaches to Knowledge Graph Question Answering (KGQA) effectively bifurcate into two paradigms, each utilizing distinct inductive biases yet suffering from complementary limitations. The first category comprises \textit{iterative subgraph retrieval} methods, such as GraftNet \cite{sun2018open}, PullNet \cite{sun2019pullnet}, and NSM \cite{he2021improving}. These architectures typically treat reasoning as a discrete \textit{Retrieve-then-Read} process \cite{karpukhin2020dense}. However, the discrete selection of nodes at each hop breaks end-to-end differentiability, preventing the retriever from adapting to downstream errors \cite{kim2023tree}. Furthermore, rigid entity-linking requirements often cause these models to fail when the requisite \textit{bridge} nodes are semantically dissimilar to the query text \cite{liang2024survey}.

The second, more recent paradigm integrates the \textit{generative capabilities of LLMs} directly. Frameworks such as Graph of Thoughts \cite{besta2024graph} and RoG \cite{luo2024reasoning} attempt to improve expressivity by structuring LLM reasoning as an iterative graph traversal. However, these agentic approaches incur prohibitive computational costs, often requiring multiple passes through billion-parameter models to answer a single query \cite{sun2024think}. Furthermore, while methods like Self-RAG \cite{asai2024selfrag} and CRAG \cite{yan2024corrective} have introduced critique-based loops to detect hallucinations, they still struggle with the \textit{reasoning shortcut} problem: the generator frequently ignores retrieved structural constraints in favor of its pre-trained parametric memory, compromising factual grounding \cite{li2024chain, gao2023retrieval}.

To reconcile the structural rigor of differentiable traversal with the efficiency required for scalable deployment, we propose \textbf{RSF-GLLM} (Recurrent Soft-Flow Graph-to-LLM). Unlike monolithic end-to-end architectures, our framework decouples reasoning from generation. We first introduce a \textit{Recurrent Soft-Flow (RSF)} module, a lightweight differentiable reasoned that propagates continuous probability mass guided by a Recurrent Query Updater \cite{hudson2019learning}. To bridge the semantic gap, we devise a \textit{Dynamic Gating Mechanism} that modulates the influence of node content versus graph topology, allowing the model to traverse semantically disjoint nodes purely via valid structural relations. Crucially, we impose a \textit{Flow Sparsity Regularization} \cite{jang2016categorical} to force these continuous distributions to converge to discrete reasoning paths, ensuring interpretability.

Following are the \textbf{contributions} we made in this paper:

\textit{Semantic Gap in Differentiable KGQA:}
We identify and formalize a critical failure mode in existing differentiable multi-hop KGQA models: the semantic gap, where enforcing semantic similarity at intermediate hops suppresses valid reasoning paths through semantically disjoint bridge entities.

\textit{Dynamic Structure–Semantics Decoupling:}
We propose Recurrent Soft-Flow (RSF), a differentiable graph reasoning module that introduces a dynamic gating mechanism to adaptively decouple structural propagation from semantic matching. This enables the model to rely purely on graph topology when semantic signals are misleading, and to re-introduce content bias only when disambiguation is required.

\textit{Principled Sparsity for Interpretable and Faithful Reasoning Paths:}
We introduce an entropy-based flow sparsity regularization and provide a theoretical analysis showing that it incentivizes convergence from soft relevance distributions to discrete, interpretable reasoning paths, enabling \textit{provably} faithful path extraction without sacrificing differentiability.

\textit{Decoupled Graph-to-LLM Architecture for Efficient Grounded QA:}
We propose a two-stage Graph-to-LLM framework that cleanly decouples graph reasoning from answer generation. A lightweight differentiable graph module performs all structural inference, while a large language model is used solely as a grounded conditional generator over extracted reasoning paths—avoiding expensive agentic reasoning loops.

\textit{Empirical Validation Across Accuracy, Robustness, and Efficiency:}
Extensive experiments on WebQSP and CWQ demonstrate that RSF-GLLM achieves competitive accuracy, robust performance on semantic-gap-heavy queries, and orders-of-magnitude inference efficiency improvements over recent LLM-centric and agentic KGQA baselines.

% \textbf{Decoupled Efficiency:} We propose a two-stage architecture where a lightweight differentiable module extracts reasoning paths for a grounded generator, reducing inference latency by orders of magnitude compared to LLM-centric baselines \cite{luo2024reasoning}.
% \\
% \textbf{Semantic Gap Bridging:} We introduce a Dynamic Gating Mechanism and theoretically demonstrate its capacity to prioritize structural validity over semantic matching when necessary, addressing a key failure mode of dense retrievers.
% \\
% \textbf{Theoretical Sparsity:} We provide a theoretical analysis proving that our entropy-based regularization forces soft-flow convergence to discrete paths, enabling the extraction of crisp reasoning chains from differentiable operations.
% \\
% \textbf{SOTA Performance:} Extensive experiments on WebQSP \cite{yih2016value} and CWQ \cite{talmor2018web} demonstrate that RSF-GLLM achieves state-of-the-art performance, effectively balancing cost, accuracy, and grounding.

\begin{figure*}[htbp]
    \centering
    \includegraphics[width=0.9\textwidth]{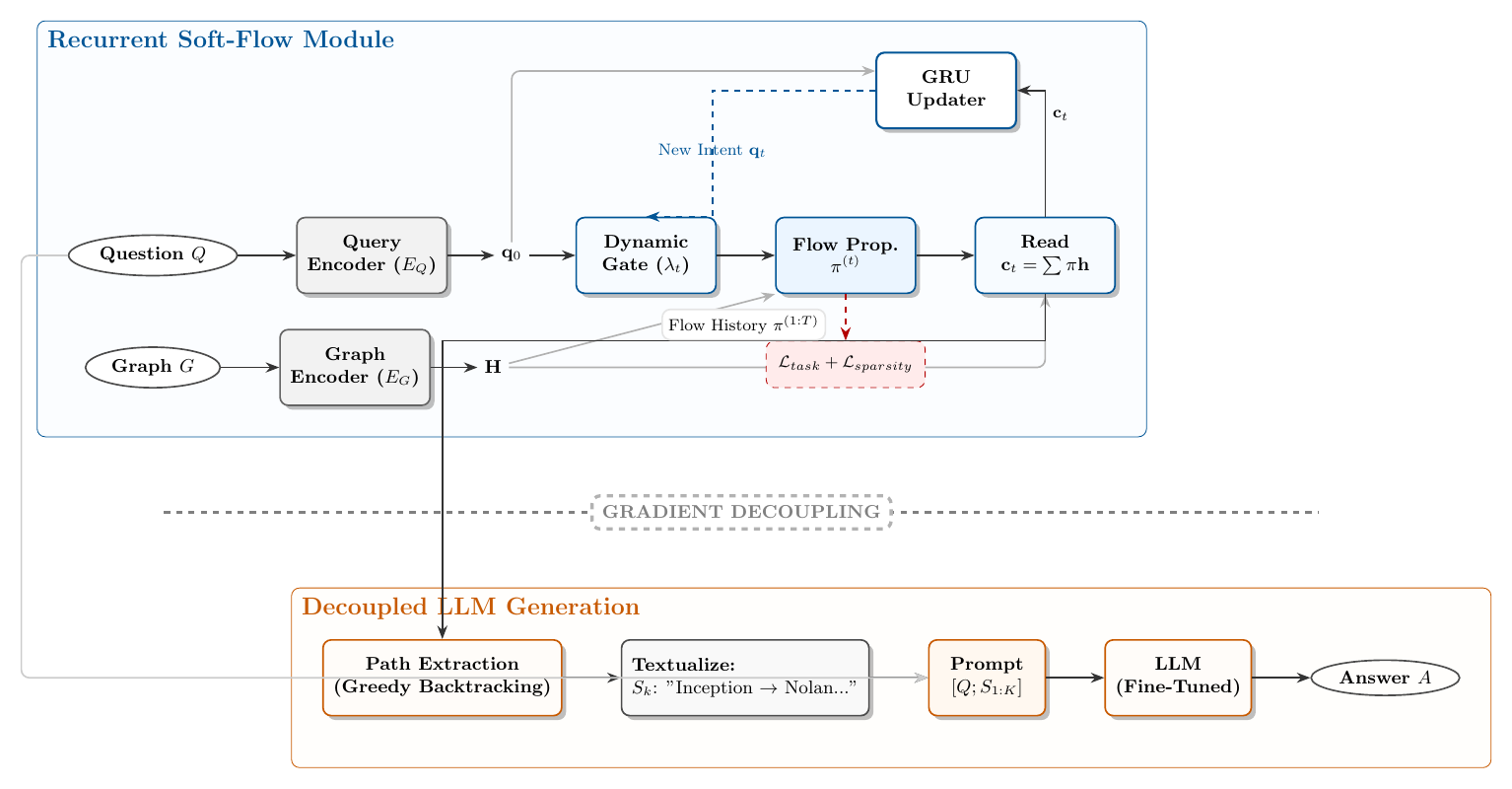}
    \caption{Overall flow diagram of RSF-GLLM}
    \label{fig:flow_diag}
\end{figure*}

\section{Methodology}

\subsection{Problem Formulation}
We formulate Multi-Hop Question Answering (MHQA) over Knowledge Graphs as a path-grounded generation task. Let $\mathcal{G} = (\mathcal{V}, \mathcal{R}, \mathcal{E})$ be a Knowledge Graph, where $\mathcal{V}$ is the set of entities, $\mathcal{R}$ is the set of relation types, and $\mathcal{E} \subseteq \mathcal{V} \times \mathcal{R} \times \mathcal{V}$ is the set of factual triplets. Given a natural language query $Q$, the objective is to generate an answer $A$ derived from a reasoning chain within $\mathcal{G}$.

Unlike standard retrieval-augmented generation (RAG) which retrieves unstructured text passages, we aim to discover a structural reasoning path $P = (v_0, r_1, v_1, \dots, r_T, v_T)$ with $v_0$ as a topic entity identified in $Q$, such that:
\begin{enumerate}
    \item Each step $(v_{t-1}, r_t, v_t) \in \mathcal{E}$ is a valid triplet in $\mathcal{G}$.
    \item The path $P$ semantically bridges the gap between the query intent and the answer node $v_T$.
\end{enumerate}
The final answer $A$ is generated by maximizing the probability $P(A \mid Q, P)$, ensuring the generation is hallucination-free and strictly grounded in the graph topology. We address the core challenge where intermediate nodes $v_t$ (for $0 < t < T$) may share no lexical similarity with $Q$ (the \textit{Semantic Gap}), requiring a differentiable traversal mechanism that learns to propagate probability mass $P(v_T \mid Q, \mathcal{G})$ via structural cues rather than surface matching.

Now, we discuss the details of our proposed solution \textbf{RSF-GLLM}. As shown in Figure \ref{fig:flow_diag}, it operates in a decoupled two-stage manner: first, we optimize a differentiable graph traverser (RSF Module) to identify reasoning paths; second, we fine-tune an LLM to generate answers conditioned on the textualized paths retrieved by the RSF module.

\subsection{Phase 1: Anchor Identification \& Initialization}
Since processing the entire KG is intractable, we first identify a computational workspace. Unlike older methods that rely on exact string matching, we use dense retrieval to handle synonyms and ambiguity.

\subsubsection{Dense Anchor Retrieval}
We employ a bi-encoder architecture (similar to DPR \cite{karpukhin2020dense}). We pre-compute embeddings for all entities $e \in \mathcal{V}$ using a BERT-based encoder $E_{Ent}$. Given question $Q$, we encode it into $\mathbf{v}_Q = E_{Q}(Q)$. We retrieve the topic entity $v_{topic}$ via Maximum Inner Product Search (MIPS):
\begin{equation}\label{eq:anchor_retr}
    v_{topic} = \underset{e \in \mathcal{V}}{\text{argmax}} (\mathbf{v}_Q^\top E_{Ent}(e))
\end{equation}
\textit{Example:} For $Q=$ \textit{``What awards did the director of Inception win?''}, the model retrieves $v_{topic} = \mathrm{Inception}$. We utilize a FAISS index \cite{johnson2019billion} for efficient retrieval.

\subsubsection{Subgraph Construction}
We extract the $T$-hop neighborhood around $v_{topic}$ to form the subgraph $\mathcal{G}_{sub} = (\mathcal{V}_{sub}, \mathcal{E}_{sub})$. To ensure alignment between the text question and graph nodes, we re-initialize all node features in $\mathcal{G}_{sub}$ using the \textbf{same} encoder used for the question.
\begin{equation}
    \mathbf{h}_v^{(0)} = \mathbf{W}_{\mathcal{V}} \cdot E_{Q}(\text{text}(v)), \quad \mathbf{e}_r = \mathbf{W}_{\mathcal{R}} \cdot E_{Q}(\text{text}(r))
\end{equation}
where $\mathbf{W}_{\mathcal{V}}, \mathbf{W}_{\mathcal{R}}$ are learnable projection matrices.

\subsection{Phase 2: Recurrent Graph Reasoning (The RSF Module)}
This module performs $T$ steps of differentiable reasoning ($t=1 \dots T$). At each step, the model updates a probability distribution over nodes (The Flow) \cite{sun2018open} and evolves the question vector (The Intent) \cite{hudson2019learning}.

\textbf{Initialization:} The flow $\boldsymbol{\pi}^{(0)}$ is a one-hot vector at $v_{topic}$. The query state is $\mathbf{q}_0 = \mathbf{v}_Q$.

\subsubsection{Step 2.1: Dynamic Relation Attention (The Compass)}
At step $t$, the model decides which edge types are valid. We compute an attention vector $\boldsymbol{\beta}^{(t)}$ over all relations using a Sigmoid activation to allow multi-path traversal:
\begin{equation}\label{eq:relatt}
    \beta_r^{(t)} = \sigma \left( \mathbf{q}_{t-1}^\top \mathbf{W}_{att} \mathbf{e}_r \right)
\end{equation}
\textit{Example:} $\beta_{\texttt{directed\_by}}^{(1)} \approx 1.0$, while $\beta_{\texttt{released\_in}}^{(1)} \approx 0.0$.

\subsubsection{Step 2.2: Relation-Guided Flow Propagation}
This step updates the node relevance distribution by propagating flow from the previous hop. We decompose this process into structural propagation and conditional semantic refinement.

\textbf{1. Structural Flow Propagation.}
Following standard message-passing neural networks (MPNNs) \cite{gilmer2017neural} and previous KGQA frameworks like GraftNet \cite{sun2018open}, we first calculate the probability mass arriving at a node $v$ purely based on graph connectivity and relation validity:
\begin{equation}
    \Phi_v^{(t)} = \sum_{(u,r,v) \in \mathcal{N}_{in}(v)} \pi_u^{(t-1)} \cdot \beta_r^{(t)}
\end{equation}
where $\mathcal{N}_{in}(v)$ denotes the incoming neighborhood. $\Phi_v^{(t)}$ represents the structural likelihood of $v$ being the next step in the reasoning chain.

\textbf{2. Semantic Disambiguation via Content Bias.}
Structural flow alone suffers from the \textit{fan-out dilution} problem \cite{sun2019pullnet} when multiple neighbors are connected via the same valid relation. Consider a modified query: \textit{``Did the director of Inception also direct Interstellar?''}. At hop $t=2$, the relation \texttt{directed\_by}$^{-1}$ is valid for all movies directed by Christopher Nolan. Without node content awareness, the structural flow $\Phi_v^{(t)}$ is uniformly diluted across \textit{Interstellar}, \textit{Dunkirk}, and \textit{Tenet}, making the correct target indistinguishable from noise.
To resolve this, we introduce a content matching score $\rho_v^{(t)}$:
\begin{equation}
    \rho_v^{(t)} = \mathbf{q}_{t-1}^\top \mathbf{W}_{node} \mathbf{h}_v
\end{equation}
This score measures the alignment between the node and the current query intent, allowing the model to highlight ``Interstellar'' specifically.

\textbf{3. Adaptive Gating Mechanism.}
Indiscriminately applying content bias is detrimental due to the \textbf{Semantic Gap}: intermediate reasoning nodes often share no lexical overlap with the query. In our running example \textit{``What awards\dots''}, the intermediate node ``Christopher Nolan'' does not semantically match the query word ``Awards.'' Enforcing content bias here would erroneously suppress the correct bridge node.
We therefore employ a dynamic gating mechanism to modulate the influence of the content bias:
\begin{equation}\label{eq:gatingLambda}
    \lambda^{(t)} = \sigma(\mathbf{w}_{\lambda}^\top \mathbf{q}_{t-1} + b_\lambda)
\end{equation}
Here, $\lambda^{(t)} \to 1$ when the query seeks specific named entities (resolving fan-out), and $\lambda^{(t)} \to 0$ when the query focuses on structural traversal (bridging the semantic gap).

\textbf{4. Probabilistic Aggregation.}
We combine the structural and semantic signals into a unified probability distribution. The aggregation models a \textit{conditional} conjunction: a node must be structurally valid, and—\textit{only if} the gate $\lambda^{(t)}$ is active—semantically relevant. We implement this via summation in log-space:
\begin{equation}\label{eq:probAgg}
    \pi_v^{(t)} = \frac{\exp \left( \log(\Phi_v^{(t)} + \epsilon) + \lambda^{(t)} \cdot \rho_v^{(t)} \right)}{\sum_{z \in \mathcal{V}_{sub}} \exp \left( \log(\Phi_z^{(t)} + \epsilon) + \lambda^{(t)} \cdot \rho_z^{(t)} \right)}
\end{equation}
where $\epsilon$ is a smoothing constant. When $\lambda^{(t)} \approx 0$, the term $\lambda^{(t)} \cdot \rho_v^{(t)}$ vanishes, and the selection relies purely on structural validity $\Phi_v^{(t)}$. When $\lambda^{(t)} \approx 1$, the selection requires both structure and content alignment.

\textbf{Remark on Re-entrant Paths.}
Contrary to acyclic traversal constraints often imposed in symbolic reasoning, RSF-GLLM explicitly permits re-entrant paths where flow returns to a previously visited node. This is essential for solving \textit{circular filtering} queries, such as \textit{``Which movies starring Leonardo DiCaprio were directed by the director of Inception?''}. To answer this, the reasoning path must traverse $Inception \xrightarrow{directed\_by} Nolan \xrightarrow{directed} Inception$, returning to the start node to verify the second constraint (``starring DiCaprio''). Strict acyclicity would discard the correct answer ($Inception$) at step $t=2$. Instead of hard blocking, we rely on the \textbf{Query Update} mechanism (Section \ref{sec:query_update}) to prevent redundant oscillations.

\subsubsection{Step 2.3: Context Aggregation (The \textit{Read} Operation)}
Once the flow distribution $\boldsymbol{\pi}^{(t)}$ is computed, we summarize the retrieved information into a single differentiable context vector $\mathbf{c}_t$. This operation functions as a \textit{soft read} from the graph memory.
We compute $\mathbf{c}_t$ as the expected value of the node embeddings under the current flow distribution:
\begin{equation}
    \mathbf{c}_t = \sum_{v \in \mathcal{V}_{sub}} \pi_v^{(t)} \cdot \mathbf{h}_v
\end{equation}
where $\mathbf{h}_v$ represents the static node embedding initialized in Phase 1. This vector $\mathbf{c}_t$ effectively compresses the semantic content of the \textit{active} region of the graph at hop $t$ into a fixed-size representation, which is then passed to the Recurrent Query Updater to determine the next reasoning step. In our example, $\mathbf{c}_1 \approx \text{embedding(``Christopher Nolan'')}$.

% \subsubsection{Step 2.3: Multi-Head Context Aggregation (The "Read" Operation)}
% A standard approach to summarize the reasoning state is to compute a probability-weighted mean of node embeddings: $\mathbf{c}_t = \sum \pi_v \mathbf{h}_v$. However, this operation is destructive when the distribution $\boldsymbol{\pi}^{(t)}$ is multi-modal. For example, if the model identifies two valid but distinct entities (e.g., \textit{DiCaprio} and \textit{Hardy}), averaging their embeddings produces a centroid vector that may lack semantic meaning ("Oversmoothing").

% To preserve distinct semantic signals, we employ a \textbf{Multi-Head Aggregation} strategy. We project the node embeddings into $K$ diverse semantic subspaces and perform weighted aggregation independently in each:
% \begin{equation}
%     \mathbf{c}_t^{(k)} = \sum_{v \in \mathcal{V}_{sub}} \pi_v^{(t)} \cdot \left( \mathbf{W}_{head}^{(k)} \mathbf{h}_v \right), \quad k \in \{1, \dots, K\}
% \end{equation}
% The final context vector $\mathbf{C}_t$ is formed by the simple concatenation of these heads, preserving the independence of the feature subspaces:
% \begin{equation}
%     \mathbf{C}_t = [ \mathbf{c}_t^{(1)} \ ; \ \dots \ ; \ \mathbf{c}_t^{(K)} ]
% \end{equation}
% This allows the RSF-GLLM to maintain a "superposition" of multiple valid reasoning paths, avoiding the information loss inherent in mean-pooling or projection-based fusion.

\subsubsection{Step 2.4: Differentiable Query Update (The \textit{Write} Operation)} \label{sec:query_update}
To enable multi-hop reasoning, the model must handle \textit{intent shifting}: resolving one constraint (e.g., finding the ``Director'') and updating the query to focus on the next (e.g., finding ``Awards'') \cite{hudson2019learning}. We model this evolution using a Gated Recurrent Unit (GRU) that acts as a differentiable instruction pointer.

At each hop $t$, the query vector $\mathbf{q}_t$ is updated based on the retrieved context $\mathbf{c}_t$:
\begin{equation}
    \mathbf{q}_t = \text{GRU}(\mathbf{c}_t, \mathbf{q}_{t-1})
\end{equation}
where $\mathbf{c}_t$ serves as the input (observation) and $\mathbf{q}_{t-1}$ as the hidden state (intent).
Mathematically, the GRU's reset gate $\mathbf{r}_t$ detects features in $\mathbf{c}_t$ that satisfy current query constraints (e.g., identifying ``Christopher Nolan''), while the update gate $\mathbf{z}_t$ rotates the vector in semantic space to point towards the next logical hop (``Awards''). This effectively performs \textit{differentiable instruction subtraction}, removing satisfied constraints from the query representation.

We employ a GRU rather than a Transformer or LSTM for this module due to \textit{sample efficiency} and \textit{inductive bias}. Reasoning chains in KGQA are typically short ($T \le 4$). Transformers often require large-scale data to learn positional dependencies that GRUs capture natively in sequential recurrence. Furthermore, the gating mechanism of the GRU provides a robust \textit{forgetting} bias, which is structurally aligned with the task of discarding resolved query parts, preventing information overload in later hops.

\subsection{Phase 3: Path Extraction \& Textualization}
A critical design choice in RSF-GLLM is the decoupling of graph reasoning from answer generation.
% Unlike methods that feed soft embeddings directly into the LLM, we explicitly extract and textualize the reasoning paths identified by the RSF module.
It ensures that the LLM's high-variance autoregressive gradients do not backpropagate into the GNN, allowing the RSF module to learn stable structural priors independently of the generator's linguistic biases.

% \textbf{Path Decoding Mechanism:}
% Once the RSF module is trained, we extract the most probable reasoning chains from the flow history $\{\boldsymbol{\pi}^{(1)}, \dots, \boldsymbol{\pi}^{(T)}\}$. Since $\boldsymbol{\pi}^{(t)}$ represents the marginal probability of visiting nodes at hop $t$, we recover the joint path probabilities using Beam Search.

% We define a path score $S(P)$ for a sequence $P = (v_0, v_1, \dots, v_T)$ as the cumulative product of node relevance scores:
% \begin{equation}
%     S(P) = \sum_{t=1}^T \log \pi_{v_t}^{(t)}
% \end{equation}
% Starting from the topic entity $v_{topic}$ (where $\pi^{(0)}=1$), we expand the top-$K$ candidates at each hop, constraining transitions to valid graph edges. This yields a set of top-$K$ paths $\mathcal{P}_{top} = \{P_1, \dots, P_K\}$ that "explain" the model's final decision.

\textbf{Path Decoding Mechanism:}
We employ a \textit{Greedy Backtracking} strategy to ensure that extracted paths strictly explain the model's most confident final predictions.
We first identify the top-$K$ answer candidates $\{v_T^{(1)}, \dots, v_T^{(K)}\}$ corresponding to the $K$ highest probability mass values in the final distribution $\boldsymbol{\pi}^{(T)}$.

For each candidate target $v_T^{(k)}$, we reconstruct its causal reasoning chain $P_k = (v_0, \dots, v_T^{(k)})$ in reverse, tracing the flow from the answer back to the source. Specifically, for a node $v_t$ at hop $t$, we select its predecessor $v_{t-1}$ by choosing the neighbor in the previous distribution $\boldsymbol{\pi}^{(t-1)}$ with the highest probability:
\begin{equation}
    v_{t-1} = \underset{u \in \mathcal{N}_{in}(v_t)}{\text{argmax}} \ \pi_u^{(t-1)}
\end{equation}
We repeat this process recursively from $t=T$ down to $t=1$ until we reach the topic entity $v_{topic}$ (where $\pi^{(0)}=1$). This yields a set of top-$K$ paths $\mathcal{P}_{top}$ that represent the most likely structural trajectories leading to the predicted answers.

\textbf{Textualization:} We convert each structured path $P_k$ into a natural language string $S_k$ using a template-based linearizer. For example:
\textit{``Inception (film) $\xrightarrow{directed\_by}$ Christopher Nolan (director) $\xrightarrow{won\_award}$ Academy Award.''}

This two-stage approach prevents the high-variance gradients of the LLM from destabilizing the structural learning of the GNN, ensuring robust optimization for both modules. By standardizing the interface as text, the RSF module becomes a universal plugin compatible with any LLM, including proprietary black-box models (e.g., GPT-4) where gradient access is unavailable. Furthermore, explicit textual paths allow for human verification of the reasoning logic prior to generation, eliminating hallucinated retrieval while significantly reducing computational costs compared to end-to-end inference.

\subsection{Phase 4: LLM Fine-Tuning}
We utilize a pre-trained Large Language Model (LLM) to synthesize the final answer. The LLM is fine-tuned to condition its generation on the extracted paths.

\textbf{Input Construction:} We concatenate the original question $Q$ with the textualized reasoning paths $S_{1:K}$:
\begin{align*}
    \mathbf{X}_{input} &= \texttt{Context: } [S_1; \dots; S_K] \quad \texttt{Question:} Q \\
    \mathbf{Y}_{output} &= \texttt{Answer: } A
\end{align*}

\textbf{Fine-Tuning:} The LLM is trained to generate the ground truth answer $A$ autoregressively, minimizing the negative log-likelihood of the answer tokens.

\subsection{Training Protocol}
We adopt a two-stage training strategy. This decoupling prevents the \textit{gradient noise} from the LLM from destabilizing the graph reasoning process and allows for lighter, more efficient optimization.

\subsubsection{Stage 1: RSF Module Training}
In this stage, the LLM is frozen (or not present). We train the GNN encoders, Attention weights, and Query GRU to maximize the probability of the correct answer node at the final hop $T$.

\textbf{Task Loss ($\mathcal{L}_{task}$):} Let $v_{ans}$ be the ground truth answer node. We construct a target one-hot distribution $\mathbf{y}_{gt} \in \{0,1\}^{|V_{sub}|}$ where $\mathbf{y}_{gt}[v_{ans}] = 1$. We minimize the KL Divergence between the predicted final flow $\boldsymbol{\pi}^{(T)}$ and the ground truth:
\begin{equation}
    \mathcal{L}_{task} = \text{KL}(\mathbf{y}_{gt} \ || \ \boldsymbol{\pi}^{(T)}) = - \sum_{v \in \mathcal{V}_{sub}} \mathbf{y}_{gt}[v] \log \boldsymbol{\pi}^{(T)}_v
\end{equation}
This is equivalent to the Cross-Entropy loss on the final node distribution.

\textbf{Flow Sparsity Regularization ($\mathcal{L}_{flow}$):} A common failure mode in differentiable graph traversal is \textit{flooding}, where the model minimizes risk by assigning uniform non-zero probability to the entire subgraph. This dilutes the context vector $\mathbf{c}_t$ with irrelevant noise.
To enforce focused reasoning, we minimize the entropy of the node relevance distribution $\boldsymbol{\pi}^{(t)}$ at each hop $t$:
\begin{equation}
    \mathcal{L}_{flow} = \frac{1}{T} \sum_{t=1}^T \left( - \sum_{v \in \mathcal{V}_{sub}} \pi_v^{(t)} \log (\pi_v^{(t)} + \epsilon) \right)
\end{equation}
where $\epsilon$ is a small constant for numerical stability.
Minimizing entropy encourages the distribution to be \textit{spiky}, forcing the model to select a distinct reasoning path (e.g., exclusively highlighting ``Christopher Nolan'') rather than smearing probability mass across all neighbors. This mimics discrete logical steps while retaining differentiability.

\textbf{Total Stage 1 Loss:} $\mathcal{L}_{Stage1} = \mathcal{L}_{task} + \lambda_1 \mathcal{L}_{flow}$.

\subsubsection{Stage 2: LLM Optimization}
After Stage 1 converges, we freeze the RSF module. We run the RSF module on the training set to extract the top-$K$ paths and generate the prompt $\mathbf{X}_{input}$. We then fine-tune the LLM using standard Causal Language Modeling (CLM) loss:
\begin{equation}
    \mathcal{L}_{Stage2} = - \sum_{j=1}^{|A|} \log P_{\theta}(a_j \mid a_{<j}, \mathbf{X}_{input})
\end{equation}

\subsection{Theoretical Analysis}
We analyze three critical theoretical properties of RSF-GLLM: its ability to converge to crisp reasoning paths (Sparsity), its capacity to traverse semantically dissimilar nodes (Expressivity), and its capability to ensure structurally faithful reasoning. Detailed proofs of all the theorems are provided in Appendix \ref{sec:proofs}.

\begin{theorem}[Sparsity Convergence]
\label{thm:sparsity}
The global minima of the flow sparsity loss $\mathcal{L}_{flow}(\boldsymbol{\pi}) = - \sum \pi_i \log \pi_i$ lie exclusively at the vertices of the probability simplex $\Delta^{N-1}$.
\end{theorem}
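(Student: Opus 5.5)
We will show two facts: the entropy $H(\boldsymbol{\pi}) = -\sum_{i=1}^N \pi_i \log \pi_i$ (with the convention $0\log 0 = 0$) is nonnegative on $\Delta^{N-1}$, and it vanishes exactly at the vertices $\mathbf{e}_1,\dots,\mathbf{e}_N$. Since every vertex attains the value $0$, the minimum value is $0$. The global minimizers are then precisely the zero set of $H$, which is the claim of Theorem~\ref{thm:sparsity}.

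\textbf{Nonnegativity.} For each $i$ we have $\pi_i \in [0,1]$, so $\log \pi_i \le 0$ whenever $\pi_i > 0$. Each term $-\pi_i \log \pi_i$ is therefore $\ge 0$, and the terms with $\pi_i = 0$ contribute $0$ by continuity of $x\log x$ at $0^+$. Hence $H(\boldsymbol{\pi}) \ge 0$ on the whole simplex. At a vertex $\mathbf{e}_j$ the only nonzero coordinate equals $1$, so $H(\mathbf{e}_j) = -1\cdot\log 1 = 0$. Thus the infimum $0$ is attained, and the global minima are exactly the points where $H = 0$.

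\textbf{Characterizing the zero set.} Suppose $H(\boldsymbol{\pi}) = 0$. Since $H$ is a sum of nonnegative terms, each term $-\pi_i\log\pi_i$ must vanish. On $[0,1]$, the function $x \mapsto -x\log x$ vanishes only at $x = 0$ and $x = 1$, because it is strictly positive on $(0,1)$. So every coordinate satisfies $\pi_i \in \{0,1\}$. Combined with $\sum_i \pi_i = 1$, exactly one coordinate equals $1$, and $\boldsymbol{\pi}$ is a vertex. As a supporting remark, strict concavity of $H$ implies that its minimum over the compact convex polytope $\Delta^{N-1}$ is attained at extreme points. This alone would not exclude non-vertex minimizers, which is why the pointwise argument above is needed.

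\textbf{Main obstacle.} The only delicate point is the boundary behavior and the $\epsilon$ in the training loss. The statement uses the unsmoothed entropy, so we must be explicit about the convention $0\log 0 = 0$ to make $H$ continuous on the closed simplex. The smoothed version $-\sum \pi_i \log(\pi_i+\epsilon)$ needs a separate check. We would show its terms are still minimized at $\pi_i\in\{0,1\}$ for small $\epsilon$: each term is $\ge 0$ when $\pi_i+\epsilon \le 1$, but at a vertex it equals $-\log(1+\epsilon) < 0$. One must then verify that no interior point does better. This follows because $-x\log(x+\epsilon)$ is concave in $x$, so the sum is concave and attains its minimum at an extreme point, and all vertices give the same value by symmetry. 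Strict concavity then rules out minimizers on edges and faces, so the vertex characterization extends to the smoothed loss. We would state the theorem for the unsmoothed $H$ as written and add this as a remark.
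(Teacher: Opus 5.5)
Your proof is correct, but it takes a different route from the paper. The paper argues in three steps:
\begin{itemize}
\item it computes the Hessian $\mathrm{diag}(-1/\pi_i)$ on the open simplex and concludes that $\mathcal{L}_{flow}$ is strictly concave;
\item it invokes the principle that a strictly concave function on a polytope is minimized only at extreme points;
\item it reads off the minimum value $0$ from $H(\mathbf{e}_k)=0$ and $H\ge 0$.
\end{itemize}
You never use concavity for the stated (unsmoothed) claim. Nonnegativity of each term $-\pi_i\log\pi_i$, plus the fact that such a term vanishes only for $\pi_i\in\{0,1\}$, identifies the zero set with the vertices directly. Your route is more elementary and complete as written. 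In particular, it sidesteps a point the paper glosses over: the Hessian only certifies strict concavity on the \emph{interior}, so non-vertex boundary points need a separate face-by-face (or one-dimensional) argument before the extreme-point principle applies.

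The paper's route buys generality, since it applies verbatim to any strictly concave objective, such as the $\epsilon$-smoothed training loss. Your pointwise idea extends there too, however. For $x\in[0,1]$ one has $-x\log(x+\epsilon)\ge -x\log(1+\epsilon)$, with equality iff $x\in\{0,1\}$. Summing and using $\sum_i\pi_i=1$ gives the bound $-\log(1+\epsilon)$, attained exactly at the vertices.

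One correction: your remark that strict concavity ``alone would not exclude non-vertex minimizers'' is false. Suppose $\boldsymbol{\pi}=\theta\mathbf{a}+(1-\theta)\mathbf{b}$ with $\mathbf{a}\neq\mathbf{b}$ in $\Delta^{N-1}$ and $\theta\in(0,1)$. Strict concavity gives $H(\boldsymbol{\pi})>\theta H(\mathbf{a})+(1-\theta)H(\mathbf{b})\ge\min\{H(\mathbf{a}),H(\mathbf{b})\}$, so $\boldsymbol{\pi}$ is not a minimizer. Hence every minimizer of a strictly concave function on a convex set is an extreme point. This is exactly what delivers the paper's ``exclusively''. It is also the very step you invoke later (``strict concavity then rules out minimizers on edges and faces''), so your two paragraphs contradict each other. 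What you presumably meant is that mere concavity only guarantees that \emph{some} minimizer is a vertex.

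With that fixed, your concavity argument for the smoothed loss is fine. The function $g(x)=-x\log(x+\epsilon)$ satisfies $g''(x)=-1/(x+\epsilon)-\epsilon/(x+\epsilon)^2<0$ on all of $[0,\infty)$, so the sum is strictly concave on the closed simplex, boundary included.
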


% \textit{Proof Sketch.} The negative entropy function is strictly convex. A fundamental result in convex optimization states that maximizing a strictly convex function (or minimizing our concave entropy loss) over a convex set (the probability simplex) forces the solution to the extreme points (vertices). See Appendix \ref{app:proof_sparsity} for the rigorous derivation.

The above property ensures that the model avoids flooding the graph with uniform probability, which would dilute the reasoning signal with noise. Instead, the optimization landscape mathematically incentivizes the model to make decisive, unambiguous choices at each hop, mimicking discrete logical steps.

\begin{theorem}[Semantic Gap Bridging]
\label{thm:expressivity}
Let $P = (v_0 \xrightarrow{r_1} v_1 \xrightarrow{r_2} \cdots \xrightarrow{r_K} v_K)$ be a valid reasoning path satisfying: \emph{(i)} intermediate nodes possess zero semantic similarity to the question (i.e., $\mathbf{h}_{v_t}^\top \mathbf{q}_0 \approx 0$ for $0 < t < K$), and \emph{(ii)} for each hop $t \in \{1, \ldots, K\}$, $v_t$ is the unique $r_t$-neighbor of $v_{t-1}$ in $\mathcal{G}_{sub}$ (no fan-out). There exists a parameter configuration for the Recurrent Query Updater and Relation Attention such that $\pi_{v_t}^{(t)} \approx 1$ at each hop $t \in \{1, \ldots, K\}$, solely via structural propagation.
\end{theorem}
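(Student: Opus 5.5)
Our plan is an explicit construction that drives the content gate to zero and makes the relation attention select exactly $r_t$ at hop $t$. Once the content bias is switched off, the aggregation in \eqref{eq:probAgg} reduces to normalizing $\Phi^{(t)}+\epsilon$. We then argue by induction on $t$ that, under the no-fan-out assumption (ii), the structural flow concentrates on $v_t$.

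\textbf{Step 1: switch off the content bias.} Set $\mathbf{w}_\lambda=\mathbf{0}$ and $b_\lambda\to-\infty$, i.e.\ take $b_\lambda$ large and negative. Then $\lambda^{(t)}=\sigma(b_\lambda)\approx 0$ for every query state. The node scores $\rho_v^{(t)}$ are bounded on the finite subgraph for the finitely many query states reached in $K$ steps, so $\lambda^{(t)}\rho_v^{(t)}\to 0$ uniformly. This step makes assumption (i) irrelevant: whatever $\mathbf{h}_{v_t}^\top\mathbf{q}_0$ is, it cannot suppress $v_t$.

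\textbf{Step 2: make the query state encode the hop index.} This is the main obstacle, because $\mathbf{q}_t$ is produced by a GRU fed the soft context $\mathbf{c}_t$, and we need $\beta^{(t)}$ to pick $r_t$ and only $r_t$. We will not rely on the contexts at all. We choose GRU parameters that ignore the input, setting the input-to-hidden weights to zero, so that the hidden dynamics depend only on $\mathbf{q}_{t-1}$. We then show a GRU can realize any prescribed finite sequence $\mathbf{q}_0\mapsto\mathbf{q}_1\mapsto\cdots\mapsto\mathbf{q}_{K-1}$ of distinct states, approximately. The simplest route is to let the state carry a hop counter in dedicated coordinates. With the update gate fully open and a large-gain candidate $\tanh(\mathbf{U}\mathbf{q}_{t-1})$, we implement a shift $\mathbf{s}_{t}\approx \mathbf{e}_{t+1}$ from $\mathbf{s}_{t-1}\approx\mathbf{e}_t$. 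This needs enough hidden dimensions, or we can assume the query dimension is at least $K+1$. If the paper's $\mathbf{q}_0=\mathbf{v}_Q$ is fixed, we first use $\mathbf{W}_{att}$ to read a one-coordinate marker from $\mathbf{q}_0$, or we simply augment coordinates.

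\textbf{Step 3: relation attention.} Given distinct approximate one-hot hop states $\mathbf{s}_{t-1}$, choose $\mathbf{W}_{att}=\alpha\sum_t \mathbf{s}_{t-1}\mathbf{u}_{r_t}^\top$ minus a large bias-like term. Here the $\mathbf{u}_{r}$ are chosen so that $\mathbf{u}_{r_t}^\top\mathbf{e}_{r}$ is large when $r=r_t$ and small otherwise, which is possible if the projected relation embeddings $\mathbf{e}_r$ are linearly independent; we assume generic $\mathbf{W}_{\mathcal{R}}$. With $\alpha\to\infty$ we get $\beta^{(t)}_{r_t}\to1$ and $\beta^{(t)}_r\to0$ for $r\neq r_t$. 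If the same relation appears at several hops, this is harmless, since the construction only requires $r_t$ to be active at hop $t$.

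\textbf{Step 4: induction.} The base case holds because $\pi^{(0)}$ is one-hot on $v_0$. Suppose $\pi^{(t-1)}_{v_{t-1}}\ge 1-\delta_{t-1}$. Then $\Phi^{(t)}_{v_t}\ge(1-\delta_{t-1})\beta_{r_t}^{(t)}$. Every other node $z$ receives at most $\delta_{t-1}\sum_r\deg\cdot\max\beta$ plus contributions through non-$r_t$ edges out of $v_{t-1}$, weighted by $\beta_r^{(t)}\approx0$. Uniqueness (ii) ensures no other node receives $r_t$-mass from $v_{t-1}$. Normalizing $\Phi+\epsilon$ then gives $\pi^{(t)}_{v_t}\ge 1-\delta_t$, where $\delta_t\to0$ as $\alpha\to\infty$, $b_\lambda\to-\infty$, $\epsilon\to0$ and $\delta_{t-1}\to0$. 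The smoothing constant contributes an error of order $|\mathcal{V}_{sub}|\epsilon$, so $\approx1$ is to be read with $\epsilon$ small relative to $1/|\mathcal{V}_{sub}|$.
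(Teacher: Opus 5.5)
Your proof is correct. It shares the paper's skeleton: switch the content gate off with $b_\lambda\to-\infty$, select $r_t$ using linear independence of the relation embeddings, and run a forward induction on the no-fan-out condition. The recurrent step, however, is genuinely different.

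The paper keeps the GRU input live and appeals to GRUs being universal approximators of dynamical systems. In its construction, $\mathbf{c}_t\approx\mathbf{h}_{v_t}$ acts as a ``structural key'' that triggers realignment from $r_t$ to $r_{t+1}$. You instead zero the input weights, build an explicit saturated-$\tanh$ hop counter, and store the whole relation schedule in $\mathbf{W}_{att}$.

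Your route buys several things:
\begin{itemize}
\item It is fully explicit. It needs no universality theorem and no distinctness of node embeddings, so re-entrant paths cost nothing.
\item Your query states depend neither on the flow nor on the attention gain $\alpha$. So GRU approximation error does not interact with the $\alpha\to\infty$ limit.
\item Putting the gain in $\mathbf{W}_{att}$ is actually necessary. GRU states satisfy $\|\mathbf{q}_t\|_\infty\le\max(1,\|\mathbf{q}_0\|_\infty)$. Hence the paper's requirement $\mathbf{W}_{att}^\top\mathbf{q}_{t-1}=\mathbf{f}_{r_t}$, with a \emph{fixed} full-rank $\mathbf{W}_{att}$ and $c\to\infty$, cannot be produced by a GRU unless $\mathbf{W}_{att}$ itself scales with $c$.
\item You correctly notice that $\mathbf{q}_0=\mathbf{v}_Q$ is not produced by the GRU. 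The paper's base case tacitly needs $\mathbf{W}_{att}^\top\mathbf{v}_Q=\mathbf{f}_{r_1}$.
\item You correctly note that $\epsilon$ leaves an $O(|\mathcal{V}_{sub}|\epsilon)$ error rather than cancelling as the gain grows.
\end{itemize}
What the paper's route offers instead is fidelity to the intended mechanism, namely intent shifting driven by what is read. That is the form you would need if parameters had to be shared across questions. Your counter hard-codes one path's schedule, which suffices for the per-instance existential claim as stated.

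Two details need tightening.
\begin{itemize}
\item The relation attention has no bias term. A score near $0$ gives $\beta=1/2$, not $0$, so ``$\mathbf{u}_{r_t}^\top\mathbf{e}_r$ small otherwise'' plus a ``bias-like term'' does not work as written. Take the dual-basis choice $\mathbf{u}_{r_t}=\mathbf{d}_{r_t}-\sum_{r\neq r_t}\mathbf{d}_r$ instead. This makes every non-target score equal to $-\alpha$ directly, just as the paper's $\mathbf{f}_{r_t}$ does.
\item $\mathbf{v}_Q$ need not be orthogonal to your counter codes, so $\alpha\sum_t\mathbf{s}_{t-1}\mathbf{u}_{r_t}^\top$ suffers cross-talk between $\mathbf{v}_Q$ and the codes. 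Define $\mathbf{W}_{att}$ through a dual basis of the limiting states $\{\mathbf{v}_Q,\mathbf{s}_1,\dots,\mathbf{s}_{K-1}\}$. Their linear independence can always be arranged when $d\ge K$ and $\mathbf{v}_Q\neq\mathbf{0}$. The same independence lets a single hidden-to-hidden matrix realize $\mathbf{v}_Q\mapsto\mathbf{s}_1\mapsto\cdots\mapsto\mathbf{s}_{K-1}$.
\end{itemize}
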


% \textit{Proof Sketch.} We construct a solution by induction. We show that the GRU functions as a rotation operator in the semantic space. By aligning the query vector $\mathbf{q}_t$ with the relation embedding $\mathbf{e}_{r_{t+1}}$ at each step, the model maximizes the structural flow term $\Phi_v^{(t)}$, rendering the low content score $\rho_v^{(t)}$ irrelevant to the final selection. See Appendix \ref{app:proof_expressivity} for the full construction.

This theorem proves that RSF-GLLM can traverse structural bridges (like \textit{Christopher Nolan}) purely based on relation semantics. This allows it to solve complex queries where the intermediate evidence has no lexical overlap with the original question, effectively bypassing the \textit{semantic gap}.

\begin{theorem}[Guaranteed Structural Faithfulness]
\label{thm:consistency}
Let $v_T$ be a target node at hop $T$ with non-zero structural flow, i.e., $\Phi_{v_T}^{(T)} > 0$. The greedy backtracking procedure, defined by $v_{t-1} = \operatorname{argmax}_{u \in \mathcal{N}_{in}(v_t)} \pi_u^{(t-1)}$, is guaranteed to reconstruct a continuous, valid path $P = (v_0, \dots, v_T)$ connecting the source entity $v_{topic}$ to $v_T$ within $T$ steps.
\end{theorem}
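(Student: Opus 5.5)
The plan is a backward induction on the hop index. The invariant is that every node selected by greedy backtracking carries strictly positive flow from $v_{topic}$. The whole argument rests on one structural observation about the propagation rule: since $\beta_r^{(t)} = \sigma(\cdot) \in (0,1)$ for every relation, we have $\Phi_v^{(t)} > 0$ if and only if $v$ has at least one incoming edge $(u,r,v)\in\mathcal{N}_{in}(v)$ with $\pi_u^{(t-1)} > 0$. In other words, positivity of structural flow is exactly witnessed by an in-neighbour that carries mass at the previous hop.

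\textbf{Step 1 (idealised case).} First I would work in the regime where the aggregation in Eq.~\eqref{eq:probAgg} preserves the support of $\Phi$, i.e.\ $\pi_v^{(t)} > 0 \iff \Phi_v^{(t)} > 0$. This holds in the limit $\epsilon \to 0$ with bounded content scores $\rho$, because the factor $\exp(\lambda\rho)$ is strictly positive and therefore cannot create or destroy support. I then induct downward from $t=T$. The hypothesis gives $\Phi_{v_T}^{(T)}>0$, so by the observation $v_T$ has an in-neighbour with positive $\pi^{(T-1)}$. The argmax $v_{T-1}$ therefore satisfies $\pi_{v_{T-1}}^{(T-1)} \ge$ that positive value $>0$. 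If $T-1 \ge 1$, this is equivalent to $\Phi_{v_{T-1}}^{(T-1)}>0$, and the same argument applies at the next step. At $t-1 = 0$, $\boldsymbol{\pi}^{(0)}$ is one-hot at $v_{topic}$, so the only in-neighbour with positive mass is $v_{topic}$, and the argmax is forced to equal $v_{topic}$; ties are irrelevant here since all other candidates have zero mass. Each chosen pair $(v_{t-1}, v_t)$ is joined by an edge of $\mathcal{E}_{sub}$ because the argmax ranges over $\mathcal{N}_{in}(v_t)$. Hence $P=(v_{topic},v_1,\dots,v_T)$ is a contiguous valid path of exactly $T$ edges. I would also remark that $P$ need not be simple, which is consistent with the re-entrant paths the method deliberately permits.

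\textbf{Step 2 (the real obstacle: $\epsilon>0$).} With smoothing, every node receives strictly positive $\pi$, including nodes with $\Phi=0$. Positivity alone then no longer certifies reachability, and the argmax could in principle choose an unreachable in-neighbour. To handle this, I would replace "positive" by a quantitative margin. For a node with $\Phi_z^{(t)}=0$, the numerator of Eq.~\eqref{eq:probAgg} is at most $\epsilon\, e^{\lambda|\rho|_{\max}}$. For a reachable node $u$, the numerator is at least $\Phi_u^{(t)} e^{-\lambda|\rho|_{\max}}$. So whenever
\[
\Phi_{\min} > \epsilon\, e^{2|\rho|_{\max}},
\]
where $\Phi_{\min}$ denotes the smallest positive structural flow along the backtrack, every reachable in-neighbour strictly dominates every unreachable one, and the argmax lands in the reachable set. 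With this condition the induction of Step~1 goes through verbatim, with "reachable" meaning $\Phi>0$ (respectively $u=v_{topic}$ at hop $0$).

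I would therefore state the theorem as holding exactly in the support-preserving limit, and for $\epsilon>0$ under the margin condition above. This condition is mild in practice, since $\epsilon$ is a numerical-stability constant and $\beta$, $\rho$ are bounded on the finite subgraph. Checking that this margin propagates uniformly across all $T$ hops is the step I expect to need the most care.
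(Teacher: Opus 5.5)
Your Step~1 is the paper's proof: the same backward induction from $\Phi_{v_T}^{(T)}>0$, a witness in-neighbour with positive mass, the argmax inheriting positive mass, and the one-hot $\boldsymbol{\pi}^{(0)}$ forcing $v_0=v_{topic}$. You also correctly expose a step the paper asserts without justification: $\pi^{(t-1)}_{v_{t-1}}>0\Rightarrow\Phi^{(t-1)}_{v_{t-1}}>0$, which the paper defends only with ``since $\boldsymbol{\pi}^{(t-1)}$ is derived from $\Phi^{(t-1)}$''. This implication needs support preservation. That holds for the $\epsilon=0$ aggregation itself. It is not a limiting statement, since for every $\epsilon>0$ all entries of $\boldsymbol{\pi}^{(s)}$, $s\ge1$, are positive. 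In that regime Step~1 is a complete proof.

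The gap is in Step~2. For $\epsilon>0$ and $t\ge2$, every $\pi_u^{(t-1)}$ is positive, so $\Phi_v^{(t)}>0$ for every node with an in-edge, and for every node once the self-loops used in the paper's implementation are added. So ``reachable meaning $\Phi>0$'' is no longer a reachability notion beyond hop~$1$, and the induction does not carry over verbatim. From $\Phi_{v_t}^{(t)}>0$ you only get an in-neighbour with $\pi^{(t-1)}>0$, which is automatic. You do not get one with $\Phi^{(t-1)}>0$, so your margin comparison may have no reachable candidate to favour.

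In fact, the statement with hypothesis $\Phi_{v_T}^{(T)}>0$ is false for $\epsilon>0$. Take $T=2$ and only the edges $v_{topic}\xrightarrow{r}a$ and $b\xrightarrow{r}c$. Then $\pi_b^{(1)}\propto\epsilon e^{\lambda^{(1)}\rho_b^{(1)}}>0$, hence $\Phi_c^{(2)}=\pi_b^{(1)}\beta_r^{(2)}>0$. Yet no length-$2$ walk from $v_{topic}$ reaches $c$: backtracking selects $v_1=b$ and then has no in-neighbour to choose. With $\rho\equiv0$, $\beta_r^{(1)}=0.1$ and $\beta_r^{(2)}=0.9$, one finds $\pi_b^{(1)}\approx10\epsilon$ and every positive flow is at least about $9\epsilon>\epsilon e^{2|\rho|_{\max}}$. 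So your margin condition holds while the conclusion fails.

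A correct $\epsilon>0$ statement needs two changes:
(i) assume genuine $T$-step reachability of $v_T$, i.e.\ positivity of the $\epsilon$-free flow;
(ii) require, at each hop $s\in\{1,\dots,T-1\}$, that $\min_{u\in R_s}\pi_u^{(s)}>\max_{z\notin R_s}\pi_z^{(s)}$, where $R_s$ is the set of nodes reachable from $v_{topic}$ by walks of exactly $s$ edges.
Your numerator bound delivers (ii) only at $s=1$, where $\Phi_z^{(1)}=0$ exactly off $R_1$. For $s\ge2$, unreachable nodes carry leaked flow $\Phi_z^{(s)}>0$ from earlier smoothing, and $\epsilon e^{\lambda|\rho|_{\max}}$ does not control it.

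The cleanest rigorous repair is continuity. At $\epsilon=0$ the support of $\boldsymbol{\pi}^{(s)}$ is exactly $R_s$, and all quantities depend continuously on $\epsilon$. Hence, for fixed parameters and $v_T\in R_T$, the backtrack succeeds for all sufficiently small $\epsilon>0$.
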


This theorem provides a critical guaranty for factual grounding. It ensures that every answer generated by our pipeline is supported by an explicit, verifiable reasoning chain existing in the knowledge graph. Unlike heuristic search methods that may fail to connect endpoints or LLM based approaches with the risk of hallucinations, \textbf{our flow-based backtracking mathematically ensures connectivity, thereby enforcing structural faithfulness by design}.

\section{Experiments}
\label{sec:experiments}

We evaluate RSF-GLLM on two benchmarks requiring multi-hop reasoning over Knowledge Graphs. Our experiments are designed to verify: (1) The superiority of the decoupled \textit{Graph-to-LLM} approach over rigid retrievers, hallucination-prone LLMs and extremely expensive agentic approaches; (2) The efficiency gains from our lightweight RSF module; and (3) The empirical validity of our theoretical claims and model ablations.

\subsection{Experimental Setup}

\textbf{Datasets.} We utilize two standard KGQA benchmarks:

\textbf{WebQSP} \cite{yih2016value}: Contains approx. 4,700 queries derived from Google Search. It requires up to 2 hops of reasoning and is characterized by high semantic variety.

\textbf{CWQ (Complex WebQuestions)} \cite{talmor2018web}: A dataset focusing on complex, compositional queries requiring up to 4 hops. It serves as a stress test for the Recurrent Query Updater's ability to maintain state over long chains.

\textbf{Baselines:} All baselines are listed in Table \ref{tab:main_results} and discussed in Appendix \ref{sec:baslines}. We have only selected baselines that are using open source LLMs with a similar number of model parameters as in Qwen3-8B to have a fair comparison. The reported numbers are mostly taken from the source papers.

\textbf{Implementation Details.}
RSF-GLLM is implemented in PyTorch using PyTorch Geometric. Following RoG~\cite{luo2024reasoning}, we use preprocessed WebQSP and CWQ datasets with entities pre-linked to Freebase. We extract $K$-hop subgraphs via BFS ($K=2$ for WebQSP, $K=4$ for CWQ). Node and relation embeddings are obtained from Qwen3-Embedding.. The RSF module is trained for up to 10 epochs using AdamW (lr=$5\times10^{-5}$, weight decay 0.01, $\lambda_1=0.1$). For the LLM reader, we use Qwen3-8B with extracted reasoning paths. We report Hit@1 and F1 following standard evaluation metrics. All experiments use a single NVIDIA A100 GPU. Baseline details are in Appendix~\ref{sec:baslines}.

\begin{table*}[t!]
    \centering
    \caption{Performance comparison on WebQSP and CWQ benchmarks. Best results are in \textbf{bold}, second-best are \underline{underlined}. The ``Agentic Search'' category (shaded) relies on computationally expensive iterative LLM calls, serving as an oracle-style upper bound. RSF-GLLM uses Qwen3-8B or LLaMa-2-7B as the LLM answer generator.}
    \label{tab:main_results}
    \resizebox{0.85\textwidth}{!}{%
    \begin{tabular}{ll|cc|cc}
        \toprule
        & & \multicolumn{2}{c|}{\textbf{WebQSP}} & \multicolumn{2}{c}{\textbf{CWQ}} \\
        \textbf{Category} & \textbf{Method} & Hit@1 & F1 & Hit@1 & F1 \\
        \midrule
        \multirow{4}{*}{\textit{Embedding-based}} 
        & KV-Mem \cite{miller-etal-2016-key} & 46.7 & 34.5 & 18.4 & 15.7 \\
        & EmbedKGQA \cite{saxena-etal-2020-improving} & 66.6 & -- & 45.9 & -- \\
        & NSM \cite{he2021improving} & 68.7 & 62.8 & 47.6 & 42.4 \\
        & TransferNet \cite{shi-etal-2021-transfernet} & 71.4 & -- & 48.6 & -- \\
        \midrule
        \multirow{5}{*}{\textit{Graph Retrieval}} 
        & GraftNet \cite{sun2018open} & 66.4 & 60.4 & 36.8 & 32.7 \\
        & PullNet \cite{sun2019pullnet} & 68.1 & -- & 45.9 & -- \\
        & SR+NSM \cite{zhang-etal-2022-subgraph} & 68.9 & 64.1 & 50.2 & 47.1 \\
        & ReaRev \cite{mavromatis-karypis-2022-rearev} & 76.4 & 70.9 & 52.9 & 47.8 \\
        & UniKGQA \cite{jiang2023unikgqaunifiedretrievalreasoning} & 77.2 & 72.2 & 51.2 & 49.1 \\
        \midrule
        \multirow{2}{*}{\textit{LLM Reasoning}} 
        & Qwen3-8B (Zero-shot) & 50.1 & 34.0 & 27.5 & 25.8 \\
        & LLaMA3.1-8B (Zero-shot) & 55.1 & 35.6 & 27.7 & 22.8 \\
        \midrule
        \multirow{6}{*}{\textit{LLM + KG}} 
        & KD-CoT \cite{wang2023knowledgedrivencotexploringfaithful} & 68.6 & 52.5 & 55.7 & -- \\
        & DECAF (FiD-3B) \cite{yu2023decaf} & 82.1 & 78.8 & 70.4 & -- \\
        & RoG (LLaMA2-7B) \cite{luo2024reasoning} & 85.7 & 70.8 & 62.6 & 56.2 \\
        & G-Retreiver \cite{he2024gretrieverretrievalaugmentedgenerationtextual}& 70.1 & -- & -- & -- \\
        & GNN-RAG (LLaMA2-7B) \cite{mavromatis-karypis-2025-gnn} & 80.6 & 71.3 & 61.7 & 59.4 \\
        & GNN-RAG + RA (LLaMA2-7B) \cite{mavromatis-karypis-2025-gnn} & 82.8 & 73.5 & 62.8 & 60.4 \\
        \midrule
        \midrule
        \multirow{2}{*}{\textit{\textbf{Ours}}} & \textbf{RSF-GLLM (LLaMA2-7B)} & \underline{89.50} & \underline{78.93} & 66.44 & \underline{60.80} \\
        & \textbf{RSF-GLLM (Qwen3-8B)} & \textbf{90.45} & \textbf{79.15} & \textbf{67.39} & \textbf{61.87} \\
        \midrule
        \midrule
        % AGENTIC SEARCH (Shaded Background)
        \rowcolor{gray!15} 
        & EffiQA (Llama3.1-8B) \cite{dong-etal-2025-effiqa} & 82.9 & -- & 69.5 & -- \\
        \rowcolor{gray!15} \multirow{-2}{*}{\textit{Agentic Search}} 
        & FD-PORT (LLaMA3-8B) \cite{10.1145/3726302.3729980} & 89.2 & -- & 74.5 & -- \\
        \bottomrule
    \end{tabular}
    }
\end{table*}

\subsection{Main Performance Results}
\label{sec:results_analysis}

Table \ref{tab:main_results} presents the performance comparison on WebQSP and CWQ. Our proposed RSF-GLLM achieves state-of-the-art results in Hit@1 on WebQSP and competitive performance on the complex CWQ benchmark, validating the effectiveness of decoupling differentiable graph reasoning from LLM generation.

\textbf{Comparison with State-of-the-Art LLM+KG Methods:}
RSF-GLLM demonstrates robust superiority in answer accuracy (Hit@1) over leading baselines in the \textit{LLM + KG} category. On WebQSP, we achieve \textbf{90.45\%} Hit@1, significantly outperforming \textit{RoG} (85.7\%), \textit{GNN-RAG} (80.6\%), and \textit{DECAF} (82.1\%).
RSF-GLLM also surpasses \textit{DECAF} on F1 (79.15\% vs.\ 78.8\%), demonstrating that our framework achieves superior performance on both Hit@1 and F1 metrics simultaneously.

Crucially, our performance is achieved with superior efficiency. While methods like \textit{GNN-RAG} and \textit{DECAF} require heavy retrieval-augmented generation pipelines or multiple LLM calls, RSF-GLLM employs a lightweight, differentiable RSF module for path extraction. This allows us to use a \textbf{single LLM call} for the final answer generation, drastically reducing computational overhead while maintaining high structural grounding. For example, the average inference time per question by RSF-GLLM was around $0.25$ sec on both data sets on an A100 80GB machine, which is significantly faster than other competitive baselines.

\textbf{Comparison with Agentic Search (Oracle Baselines):}
The \textit{Agentic Search} category (shaded in Table \ref{tab:main_results}) treats KG traversal as a sequential decision process involving iterative LLM invocations (e.g., FD-PORT requires 50+ calls). We consider these methods as high-cost \textit{oracles}. Remarkably, RSF-GLLM surpasses these computationally expensive frameworks on WebQSP (90.45\% vs. 89.2\% for FD-PORT). This indicates that our differentiable Recurrent Soft-Flow mechanism effectively captures complex reasoning structures akin to agentic planning, but at a fraction of the latency and cost.

\textbf{Note on the WebQSP--CWQ Hit@1 gap.}
The raw Hit@1 gap between WebQSP (90.45\%) and CWQ (67.39\%) partially reflects a structural property of the datasets rather than a reasoning limitation. Under the standard $K$-hop BFS subgraph construction adopted by all retrieval-based baselines, 19.3\% of CWQ test questions have their gold answer entity \textit{outside} the extracted subgraph, imposing a theoretical Hit@1 ceiling of approximately $80.7\%$ on CWQ. WebQSP's coverage under $K{=}2$ is $95.7\%$, yielding a naturally higher ceiling. A substantial portion of the observed cross-dataset Hit@1 gap is therefore attributable to subgraph coverage limits, not model failure; the $K$-hop bottleneck this exposes motivates the Dynamic Beam Expansion extension discussed in Section~\ref{sec:conclusion}.

\begin{table*}[t]
    \centering
    \caption{Comparison of retrieval efficiency and performance. Efficiency columns (Params, Time, Mem) are measured for 1 epoch training on CWQ.
    \textbf{\#LLM} denotes the number of LLM calls required strictly during the path retrieval phase.
    H@$k$ measures whether the ground truth answer appears in the top-$k$ candidates provided to the LLM reader.}
    \label{tab:retrieval_efficiency}
    \resizebox{0.8\textwidth}{!}{%
    \begin{tabular}{l|cccc|ccc|ccc}
        \toprule
        & & & & & \multicolumn{3}{c|}{\textbf{WebQSP}} & \multicolumn{3}{c}{\textbf{CWQ}} \\
        \textbf{Method} & \textbf{Params} & \textbf{Time (s)} & \textbf{Mem (MiB)} & \textbf{\#LLM} & \textbf{H@1} & \textbf{H@5} & \textbf{H@10} & \textbf{H@1} & \textbf{H@5} & \textbf{H@10} \\
        \midrule
        RoG & 6.7B & 9671 & 64,568 & 1 & 59.3 & 73.5 & 77.7 & 26.1 & 46.4 & 54.1 \\
        GNN-RAG & \textbf{2.9M} & \textbf{737} & \textbf{156} & \textbf{0} & \underline{76.2} & 85.8 & 88.0 & \underline{52.9} & 64.2 & 67.0 \\
        GNN-RAG+RA & 6.7B & 10406 & 64,579 & 1 & \textbf{76.3} & \underline{87.6} & \underline{91.0} & \textbf{55.1} & \textbf{69.3} & \underline{72.1} \\
        \midrule
        \textbf{RSF (Ours)} & \underline{176M} & \underline{993} & \underline{2,770} & \textbf{0} & 73.9 & \textbf{89.4} & \textbf{93.3} & 51.3 & \underline{67.7} & \textbf{73.1} \\
        \bottomrule
    \end{tabular}
    }
\end{table*}

\subsection{Retrieval Performance and Efficiency}
\label{sec:retrieval_analysis}

Table \ref{tab:retrieval_efficiency} presents a comprehensive evaluation of retrieval quality versus computational cost. The results highlight distinct limitations in existing baselines that RSF-GLLM effectively overcomes.

\textbf{Limitations of LLM-Heavy Retrievers:}
Methodologies like \textit{RoG} rely on fine-tuning large language models (6.7B parameters) to generate reasoning paths directly. Despite its massive size and computational demand ($>$64GB memory, $\sim$10k seconds/epoch), RoG performs significantly worse than our approach, achieving only 54.1\% Hit@10 compared to RSF's 72.1\%. This indicates that generative LLMs, while fluent, often hallucinate relations or lose track of structural constraints in purely parametric generation.
Similarly, \textit{GNN-RAG+RA}, an agentic extension of GNN-RAG that integrates an LLM reasoner, improves retrieval to 71.1\%. However, this performance gain comes at the cost of extreme complexity—matching RoG's prohibitive resource footprint and requiring slow, iterative LLM calls during retrieval. Even with this agentic overhead, it fails to match the retrieval precision of our lightweight module.

\textbf{Limitations of Pure GNNs:}
At the other end of the spectrum, the standard \textit{GNN-RAG} retriever is extremely lightweight (2.9M parameters) and fast. However, its performance (67.0\% Hit@10) is inadequate for complex multi-hop QA. Relying primarily on static embedding similarity, it lacks the capacity to model the dynamic, state-dependent reasoning required for compositional queries.

\textbf{The RSF Advantage:}
RSF-GLLM occupies a unique \textit{sweet spot}. By designing a specialized differentiable reasoner, we reduce the model size to \textbf{176M parameters} ($38\times$ smaller than RoG/GNN-RAG+RA) and eliminate the need for LLM calls during retrieval. Unlike the static GNN-RAG, our Recurrent Soft-Flow mechanism is expressive enough to model complex agentic-like planning steps (Section \ref{sec:qualitative}), allowing it to achieve state-of-the-art retrieval performance (\textbf{72.1\% Hit@10}). This demonstrates that a structure-aware module can outperform massive general-purpose LLMs in graph navigation tasks while remaining efficient enough for training on consumer-grade hardware.

\subsection{Ablation Study}

We validate the contribution of each RSF component through systematic ablations on WebQSP (Table~\ref{tab:ablation}).

\begin{table}[t]
    \centering
    \caption{Ablation study of RSF module on WebQSP. $\Delta$ indicates change from the unablated RSF module performance on the Hit@10 metric.}
    \label{tab:ablation}
    \small
    \begin{tabular}{l|ccc}
        \toprule
        \textbf{Configuration} & \textbf{Hit@10} & \textbf{F1} & \textbf{$\Delta$} \\
        \midrule
        \textbf{Full RSF} & \textbf{89.2} & \textbf{52.4} & --- \\
        \midrule
        w/o Query Update & 83.6 & 46.2 & $-5.6$ \\
        w/o Content Bias & 82.8 & 45.1 & $-6.4$ \\
        w/o Dynamic Gate & 86.8 & 49.8 & $-2.4$ \\
        w/o Flow Sparsity & 85.6 & 47.5 & $-3.6$ \\
        \bottomrule
    \end{tabular}
\end{table}

\begin{table}[t]
    \centering
    \caption{Effect of RSF paths vs.\ QA-pair memorization. ``No RSF'' fine-tunes the LLM on (question, answer) pairs only, without reasoning paths.}
    \label{tab:no_rsf_ablation}
    \small
    \resizebox{\columnwidth}{!}{%
    \begin{tabular}{l|cc|cc}
        \toprule
        & \multicolumn{2}{c|}{\textbf{WebQSP}} & \multicolumn{2}{c}{\textbf{CWQ}} \\
        \textbf{Condition} & H@1 & F1 & H@1 & F1 \\
        \midrule
        LLaMA-2-7B (no RSF)  & 73.16 & 55.65 & 46.42 & 40.34 \\
        Qwen3-8B (no RSF)    & 71.94 & 54.27 & 48.21 & 43.19 \\
        \midrule
        LLaMA-2-7B (w/ RSF)  & \textbf{89.50} & \textbf{78.93} & \textbf{66.44} & \textbf{60.80} \\
        $\Delta$ (LLaMA)     & +16.34 & +23.28 & +20.02 & +20.46 \\
        \bottomrule
    \end{tabular}}
\end{table}

\begin{table}[t]
    \centering
    \caption{Sensitivity to subgraph radius $K$ on CWQ (RSF retrieval Hit@1).}
    \label{tab:khop_sensitivity}
    \small
    \begin{tabular}{c|cccc}
        \toprule
        \textbf{$K$-hop} & 1 & 2 & 3 & 4 \\
        \midrule
        \textbf{Hit@1} & 44.35 & 45.77 & 48.34 & \textbf{51.30} \\
        \bottomrule
    \end{tabular}
\end{table}

\textbf{Content Bias} ($-6.4\%$) is the most critical component. Without the content score $\rho_v$, the model cannot disambiguate fan-out scenarios where multiple neighbors share the same valid relation. When querying about a specific movie by a prolific director, structural flow $\Phi_v$ distributes uniformly across all films, making the correct target indistinguishable.

\textbf{Query Update} ($-5.6\%$) disables the GRU-based intent evolution. Without it, the query vector $\mathbf{q}$ remains frozen across hops, preventing ``differentiable instruction subtraction''---the model cannot shift attention from resolving one constraint (e.g., ``find the director'') to the next (e.g., ``find awards'').

\textbf{Dynamic Gate} ($-2.4\%$) validates the Semantic Gap hypothesis. Fixing $\lambda \equiv 1$ (see Eq.~\ref{eq:gatingLambda}) forces content matching at every hop, suppressing valid bridge entities like ``Christopher Nolan'' that share no lexical overlap with query terms like ``awards.''

\textbf{Flow Sparsity by setting $\lambda_1 = 0$ in Total Stage 1 Loss} ($-3.6\%$) prevents ``flooding'' where probability spreads uniformly, degrading the GRU's ability to track reasoning progress.

\textbf{Effect of $\lambda_1$} Figure~\ref{fig:flow_sparsity} reveals an inverted U-curve analogous to \textit{temperature in softmax}: $\lambda_1 \to 0$ yields diffuse distributions ($\sim$11 candidates), while $\lambda_1 \to 0.5$ causes premature commitment ($\sim$2 candidates) where early-hop errors propagate irrecoverably. The optimal $\lambda_1=0.1$ achieves $\sim$4 effective candidates, validating Theorem~\ref{thm:sparsity}.

\begin{figure}[t]
    \centering
    \includegraphics[width=0.75\columnwidth]{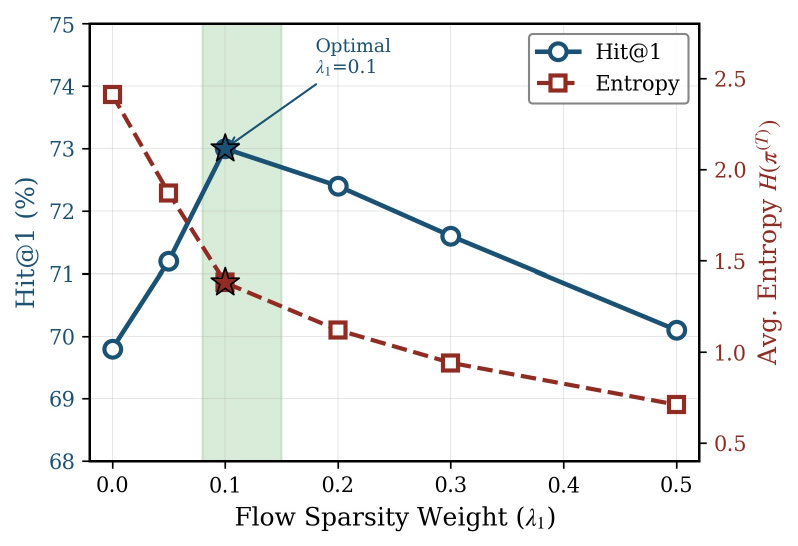}
    \caption{Effect of $\lambda_1$ on Hit@1 and entropy. Optimal at $\lambda_1=0.1$.}
    \label{fig:flow_sparsity}
\end{figure}

More experiments on dynamic gating (including a CWQ-specific gate ablation in Table~\ref{tab:cwq_gate_ablation}) and case studies are presented in Appendix \ref{sec:lambda_analysis} and \ref{sec:qualitative} respectively.

\textbf{Isolating RSF Path Contribution:}
To verify that performance gains stem from RSF-extracted reasoning paths rather than QA-pair memorization, we trained an identical LLM on (question, answer) pairs without any RSF paths (Table~\ref{tab:no_rsf_ablation}). The $+16$--$23$pt gap across all metrics confirms that the LLM cannot recover structural and factual information from questions alone; RSF paths provide essential grounding that drives the performance improvement.

\textbf{Sensitivity to Subgraph Radius:}
We conducted a sensitivity study varying the $K$-hop subgraph radius on CWQ (Table~\ref{tab:khop_sensitivity}). Results show a consistent improvement in RSF retrieval Hit@1 with increased hop budget (44.35\% at $K{=}1$ to 51.30\% at $K{=}4$), highlighting the importance of multi-hop reasoning capacity and our design choice to include self-loops.

\section{Conclusion and Future Work}
\label{sec:conclusion}

We introduced RSF-GLLM, a framework that decouples differentiable graph reasoning from LLM generation to bridge the semantic gap in KGQA. Our Recurrent Soft-Flow module employs dynamic gating and flow sparsity regularization to extract interpretable reasoning paths, avoiding the computational cost of iterative agentic loops. Experiments demonstrate that RSF-GLLM achieves state-of-the-art retrieval accuracy and a $21\times$ inference speedup over baselines by requiring only a single LLM call.

Future work will address two limitations of the current pipeline through extensions naturally enabled by the decoupled RSF--LLM design. To scale beyond the pre-extracted $K$-hop subgraph regime---which can bottleneck recall on massive real-world KGs---we plan to introduce \textit{Dynamic Beam Expansion}, a reasoning-step-driven traversal in which the model dynamically queries the backend graph database for the 1-hop neighbors of \textit{active} nodes ($\pi^{(t)}_v > \theta$), guided by the Flow Sparsity regularizer to keep the active set small and by an adaptive GRU-based halting criterion that eliminates the fixed hop count $T$. To mitigate the popularity bias of the LLM generator (Appendix E, Failure Mode 2), we plan to integrate \textit{Context-Aware Contrastive Decoding} at inference: by contrasting LLM predictions conditioned on the question alone (capturing the parametric prior) against predictions conditioned on the question together with the extracted reasoning paths, we can dynamically penalize tokens favored purely by the internal prior and amplify predictions supported by retrieved structural context. Because RSF cleanly decouples reasoning from generation, this contrastive correction can be applied without retraining the underlying LLM. Beyond these, we plan to extend RSF-GLLM to temporal and multi-modal knowledge graphs and to investigate joint RSF--LLM optimization.

\section*{Impact Statement}

This paper presents work whose goal is to advance the field of Machine Learning, specifically in interpretable and efficient Question Answering. Our proposed RSF-GLLM framework explicitly addresses the critical issue of hallucination in Large Language Models by grounding generation in verifiable Knowledge Graph paths, thereby contributing to more reliable and trustworthy AI systems. Furthermore, by decoupling reasoning from generation, our approach significantly reduces the computational resources required for multi-hop inference compared to iterative agentic baselines, promoting more energy-efficient and environmentally sustainable AI deployment.

% In the unusual situation where you want a paper to appear in the
% references without citing it in the main text, use \nocite
% \nocite{langley00}

\bibliography{example_paper}
\bibliographystyle{icml2026}

%%%%%%%%%%%%%%%%%%%%%%%%%%%%%%%%%%%%%%%%%%%%%%%%%%%%%%%%%%%%%%%%%%%%%%%%%%%%%%%
%%%%%%%%%%%%%%%%%%%%%%%%%%%%%%%%%%%%%%%%%%%%%%%%%%%%%%%%%%%%%%%%%%%%%%%%%%%%%%%
% APPENDIX
%%%%%%%%%%%%%%%%%%%%%%%%%%%%%%%%%%%%%%%%%%%%%%%%%%%%%%%%%%%%%%%%%%%%%%%%%%%%%%%
%%%%%%%%%%%%%%%%%%%%%%%%%%%%%%%%%%%%%%%%%%%%%%%%%%%%%%%%%%%%%%%%%%%%%%%%%%%%%%%
\newpage
\appendix
\onecolumn

\section*{Appendix}

% \section{You \emph{can} have an appendix here.}

% You can have as much text here as you want. The main body must be at most $8$
% pages long. For the final version, one more page can be added. If you want, you
% can use an appendix like this one.

% The $\mathtt{\backslash onecolumn}$ command above can be kept in place if you
% prefer a one-column appendix, or can be removed if you prefer a two-column
% appendix.  Apart from this possible change, the style (font size, spacing,
% margins, page numbering, etc.) should be kept the same as the main body.

\section{Theoretical Proofs}\label{sec:proofs}

\subsection{Proof of Theorem \ref{thm:sparsity} (Sparsity Convergence)}
\label{app:proof_sparsity}

\textbf{Statement:} \textit{The global minima of the flow sparsity loss $\mathcal{L}_{flow}(\boldsymbol{\pi})$ lie exclusively at the vertices of the simplex $\Delta^{N-1}$.}

\begin{proof}
Let $\boldsymbol{\pi} \in \mathbb{R}^N$ be the relevance distribution vector satisfying the simplex constraints: $\sum_{i=1}^N \pi_i = 1$ and $\pi_i \ge 0$.
The Flow Sparsity Regularization term is defined as the entropy of the distribution:
\begin{equation}
    \mathcal{L}_{flow}(\boldsymbol{\pi}) = H(\boldsymbol{\pi}) = - \sum_{i=1}^N \pi_i \log \pi_i
\end{equation}
To analyze the minima, we inspect the curvature of the function. The Hessian of $\mathcal{L}_{flow}$ with respect to $\boldsymbol{\pi}$ is a diagonal matrix with elements:
\begin{equation}
    \frac{\partial^2 \mathcal{L}_{flow}}{\partial \pi_i^2} = - \frac{1}{\pi_i}
\end{equation}
Since $\pi_i > 0$ in the interior of the simplex, the second derivative is strictly negative. Thus, $\mathcal{L}_{flow}(\boldsymbol{\pi})$ is a \textbf{strictly concave} function over the domain.

A fundamental result in convex optimization states that the minimum of a strictly concave function over a convex polytope (such as the simplex $\Delta^{N-1}$) must occur at one of the extreme points (vertices) of the polytope.
The vertices of the probability simplex are the standard basis vectors $\{\mathbf{e}_1, \dots, \mathbf{e}_N\}$, which correspond to one-hot distributions (i.e., $\pi_k = 1$ for some $k$, and $0$ otherwise).

Evaluating the loss at a vertex $\mathbf{e}_k$:
\begin{equation}
    \mathcal{L}_{flow}(\mathbf{e}_k) = - (1 \log 1 + \sum_{j \neq k} 0 \log 0) = 0
\end{equation}
(taking $\lim_{x\to 0} x \log x = 0$).
Since entropy is non-negative $H(\boldsymbol{\pi}) \ge 0$, the value $0$ is the global minimum. Therefore, minimizing $\mathcal{L}_{flow}$ forces the model to select a single crisp path node at each hop, mimicking discrete reasoning.
\end{proof}

\subsection{Proof of Theorem \ref{thm:expressivity} (Semantic Gap Bridging)}
\label{app:proof_expressivity}

\textbf{Statement:} \textit{Let $P = (v_0 \xrightarrow{r_1} v_1 \xrightarrow{r_2} \cdots \xrightarrow{r_K} v_K)$ be a valid reasoning path satisfying: (i) intermediate nodes possess zero semantic similarity to the question (i.e., $\mathbf{h}_{v_t}^\top \mathbf{q}_0 \approx 0$ for $0 < t < K$), and (ii) for each hop $t \in \{1, \ldots, K\}$, $v_t$ is the unique $r_t$-neighbor of $v_{t-1}$ in $\mathcal{G}_{sub}$ (no fan-out). There exists a parameter configuration for the Recurrent Query Updater and Relation Attention such that $\pi_{v_t}^{(t)} \approx 1$ at each hop $t \in \{1, \ldots, K\}$, solely via structural propagation.}

\begin{proof}
We provide a constructive proof by induction on the hop index $t$.
Let the path be $P = (v_0 \xrightarrow{r_1} v_1 \xrightarrow{r_2} \dots \xrightarrow{r_K} v_K)$, and let the Semantic Gap assumption hold: $\text{sim}(\mathbf{h}_{v_t}, \mathbf{q}_0) = 0$ for all $t > 0$.
We aim to show that for each hop $t \in \{1, \ldots, K\}$, there exists a parameter configuration such that $\pi_{v_t}^{(t)} \approx 1$.

\textbf{Step 0: Gate Construction (Suppressing Content Bias).}
Set $b_\lambda \ll 0$ in Eq.~\ref{eq:gatingLambda}. Then $\lambda^{(t)} = \sigma(\mathbf{w}_\lambda^\top \mathbf{q}_{t-1} + b_\lambda) \approx 0$ for all $t$, regardless of the query state. The probabilistic aggregation (Eq.~\ref{eq:probAgg}) then simplifies. Since $\exp(\log(x)) = x$, it reduces directly to a ratio of structural flow values:
\begin{equation}
    \pi_v^{(t)} \approx \frac{\Phi_v^{(t)}}{\displaystyle\sum_z \Phi_z^{(t)}}
\end{equation}
(the additive $\epsilon$ terms cancel and are negligible in the regime $c \to \infty$ established in Step~1).

\textbf{Step 1: Relation Alignment (Existence of $\mathbf{W}_{att}$).}
Assume the relation embeddings $\{\mathbf{e}_r\}_{r \in \mathcal{R}}$ are linearly independent. Their dual basis $\{\mathbf{d}_r\}_{r \in \mathcal{R}}$ then exists, satisfying $\mathbf{d}_r^\top \mathbf{e}_{r'} = \delta_{rr'}$ (Kronecker delta). For a target relation $r_t$, define the direction:
\begin{equation}
    \mathbf{f}_{r_t} = c\!\left(\mathbf{d}_{r_t} - \alpha \sum_{r'' \neq r_t} \mathbf{d}_{r''}\right), \quad c,\, \alpha > 0
\end{equation}
Then $\mathbf{f}_{r_t}^\top \mathbf{e}_{r_t} = c$ and $\mathbf{f}_{r_t}^\top \mathbf{e}_{r''} = -c\alpha$ for all $r'' \neq r_t$.
We fix $\mathbf{W}_{att}$ as any full-rank matrix from the query space to the relation embedding space; its specific value does not affect the construction. Since $\mathbf{W}_{att}$ is full-rank, for each hop's target direction $\mathbf{f}_{r_t}$ there exists a query state satisfying $\mathbf{W}_{att}^\top \mathbf{q}_{t-1} = \mathbf{f}_{r_t}$; Step~2 below constructs GRU weights to produce these target query states at each hop.
With $\mathbf{q}_{t-1}$ so positioned, the relation attention scores (Eq.~\ref{eq:relatt}) evaluate to $\mathbf{q}_{t-1}^\top \mathbf{W}_{att} \mathbf{e}_{r_t} = \mathbf{f}_{r_t}^\top \mathbf{e}_{r_t} = c$ and $\mathbf{q}_{t-1}^\top \mathbf{W}_{att} \mathbf{e}_{r''} = \mathbf{f}_{r_t}^\top \mathbf{e}_{r''} = -c\alpha$, yielding:
\begin{equation}
    \beta_{r_t}^{(t)} = \sigma(c) \;\to\; 1 \qquad \text{and} \qquad \beta_{r''}^{(t)} = \sigma(-c\alpha) \;\to\; 0 \quad \forall\, r'' \neq r_t
\end{equation}
as $c \to \infty$.

With $\beta_{r_t}^{(t)} \approx 1$ and $\beta_{r''}^{(t)} \approx 0$ for all $r'' \neq r_t$, the structural flow propagation concentrates on nodes reachable from $v_{t-1}$ via $r_t$. Using the inductive hypothesis $\pi_{v_{t-1}}^{(t-1)} \approx 1$:
\begin{align}
    \Phi_{v_t}^{(t)} &= \sum_{(u,\,r,\,v_t)\,\in\,\mathcal{N}_{in}(v_t)} \pi_u^{(t-1)} \cdot \beta_r^{(t)} \;\approx\; \pi_{v_{t-1}}^{(t-1)} \cdot \beta_{r_t}^{(t)} \;\approx\; 1 \\
    \Phi_z^{(t)} &\approx\; 0 \qquad \text{for all } z \text{ reachable from } v_{t-1} \text{ only via } r'' \neq r_t
\end{align}
By condition (ii) of the theorem, $v_t$ is the unique $r_t$-neighbor of $v_{t-1}$ in $\mathcal{G}_{sub}$, so no other node receives $\Phi \approx 1$. Substituting into the simplified softmax of Step 0, we obtain $\pi_{v_t}^{(t)} \approx 1$.

\textbf{Step 2: Dynamic Update (Existence of GRU parameters).}
After traversing edge $r_t$, the query must update so that $\mathbf{W}_{att}^\top \mathbf{q}_t = \mathbf{f}_{r_{t+1}}$, aligning with the next target relation $r_{t+1}$. The update rule is $\mathbf{q}_t = \text{GRU}(\mathbf{c}_t, \mathbf{q}_{t-1})$, where $\mathbf{c}_t = \sum_v \pi_v^{(t)} \mathbf{h}_v \approx \mathbf{h}_{v_t}$ since $\pi_{v_t}^{(t)} \approx 1$.
Since GRUs with sufficient hidden state dimension are universal approximators of dynamical systems, there exist weights implementing a state transition $f$ such that:
\begin{equation}
    \mathbf{q}_t = f(\mathbf{q}_{t-1},\, \mathbf{h}_{v_t}) \quad \text{s.t.} \quad \mathbf{W}_{att}^\top \mathbf{q}_t = \mathbf{f}_{r_{t+1}}
\end{equation}
Even when $\mathbf{h}_{v_t}$ shares no lexical similarity with the query, it acts as a distinct structural key: the GRU learns the state transition ``currently aligned with relation $r_t$; upon observing node $v_t$, realign with $r_{t+1}$.''

\textbf{Conclusion.}
The base case $\pi_{v_0}^{(0)} = 1$ holds by initialization. Steps 1--2 establish the inductive step: $\pi_{v_{t-1}}^{(t-1)} \approx 1$ implies $\pi_{v_t}^{(t)} \approx 1$. Therefore, at each hop $t \in \{1, \ldots, K\}$, the flow is concentrated on the correct path node $v_t$, confirming that the RSF architecture bridges the semantic gap using only structural cues (relation type attention $\beta_r^{(t)}$, Eq.~\ref{eq:relatt}) and state memory (GRU), without relying on node-question lexical similarity.
\end{proof}

\textbf{Remark on the linear independence assumption.}
Step 1 assumes that the relation embeddings $\{\mathbf{e}_r\}_{r \in \mathcal{R}}$ are linearly independent, which guarantees the existence of a dual basis enabling simultaneous concentration ($\beta_{r_t} \to 1$) and suppression ($\beta_{r''} \to 0$) of relation attention scores. In practice, this condition is generically satisfied for high-dimensional embeddings: a set of $|\mathcal{R}|$ vectors in $\mathbb{R}^d$ with $d \gg |\mathcal{R}|$ is almost surely linearly independent under any continuous distribution. Furthermore, recent LLM probing studies show that relational structures in high-dimensional dense representations---such as those produced by Qwen3-Embedding---tend to emerge as approximately linearly independent subspaces \cite{hernandez2024linearity, park2024linear}. The theorem should therefore be read as an \textit{architectural-capacity} guarantee: it shows that the RSF parameterization is expressive enough to realize the required relation alignment whenever the linear independence condition is met by the underlying embedding space.

\subsection{Proof of Theorem \ref{thm:consistency} (Guaranteed Structural Faithfulness)}
\label{app:proof_consistency}
\textbf{Statement:} Let $v_T$ be a target node at hop $T$ with non-zero structural flow, i.e., $\Phi_{v_T}^{(T)} > 0$. The greedy backtracking procedure, defined by $v_{t-1} = \operatorname{argmax}_{u \in \mathcal{N}_{in}(v_t)} \pi_u^{(t-1)}$, is guaranteed to reconstruct a continuous, valid path $P = (v_0, \dots, v_T)$ connecting the source entity $v_{topic}$ to $v_T$ within $T$ steps.

\begin{proof}
Recall the structural flow propagation equation for a node $v$ at hop $t$ in the RSF module:
\begin{equation}
    \Phi_v^{(t)} = \sum_{(u, r, v) \in \mathcal{E}} \pi_u^{(t-1)} \cdot \beta_r^{(t)}
\end{equation}
where $\pi_u^{(t-1)}$ is the probability mass at predecessor node $u$, and $\beta_r^{(t)} \in [0, 1]$ is the relation attention score produced by the Dynamic Relation Attention module (Eq.~\ref{eq:relatt}). The greedy backtracking algorithm selects the predecessor at step $t$ via:
\begin{equation}
    v_{t-1}^* = \underset{u \in \mathcal{N}_{in}(v_t)}{\operatorname{argmax}} \ \pi_u^{(t-1)}
\end{equation}
We assume the initialization $\boldsymbol{\pi}^{(0)}$ is a one-hot vector at the source node $v_{topic}$, such that $\pi_{v}^{(0)} = 1$ if $v=v_{topic}$ and $0$ otherwise.

We prove the theorem by induction on the hop index $t$, proceeding backwards from the target hop $T$ down to the source hop $0$.

\textit{Base Assumption (Step $T$):}
The premise of the theorem states that the selected target node $v_T$ has received non-zero structural flow: $\Phi_{v_T}^{(T)} > 0$.

\textit{Inductive Step:}
Consider an arbitrary node $v_t$ selected at hop $t$ (where $0 < t \le T$) such that its structural flow $\Phi_{v_t}^{(t)} > 0$.
By the definition of the flow summation, if $\Phi_{v_t}^{(t)} > 0$, the sum $\sum_{u} \pi_u^{(t-1)} \beta_r^{(t)}$ is strictly positive. Since probabilities $\pi$ and attention scores $\beta$ are non-negative, this implies that the set of contributing predecessors is non-empty. Specifically, there exists at least one neighbor $u' \in \mathcal{N}_{in}(v_t)$ such that:
\begin{equation}
    \pi_{u'}^{(t-1)} \cdot \beta_{r'}^{(t)} > 0 \implies \pi_{u'}^{(t-1)} > 0
\end{equation}
The greedy backtracking algorithm selects the predecessor $v_{t-1}^*$ that maximizes $\pi_u^{(t-1)}$. Since the set of neighbors with non-zero probability is non-empty (containing at least $u'$), the maximum value in this set must be strictly positive:
\begin{equation}
    \pi_{v_{t-1}^*}^{(t-1)} \ge \pi_{u'}^{(t-1)} > 0
\end{equation}
Thus, the selected predecessor $v_{t-1}^*$ has non-zero probability mass.
If $t-1 > 0$, this node $v_{t-1}^*$ must have received flow from its own predecessors (since $\pi^{(t-1)}$ is derived from the flow $\Phi^{(t-1)}$). Therefore, $\Phi_{v_{t-1}^*}^{(t-1)} > 0$.
This establishes the inductive step: if the node at $t$ has valid flow, the selected node at $t-1$ also has valid flow.

\textit{Termination (Step $0$):}
The recursion proceeds until we reach $t=0$, selecting a node $v_0$ with $\pi_{v_0}^{(0)} > 0$.
By definition, the initial distribution $\boldsymbol{\pi}^{(0)}$ is a one-hot vector peaked at $v_{topic}$. The only node with non-zero probability at $t=0$ is $v_{topic}$.
Therefore, it must be that $v_0 = v_{topic}$.

\textit{Conclusion:}
The sequence of nodes $P = (v_0, v_1, \dots, v_T)$ constructed by the backtracking procedure satisfies two conditions:
1. $v_0 = v_{topic}$.
2. For every step $t > 0$, $v_{t-1}$ is a valid neighbor of $v_t$ in $\mathcal{G}$ (specifically, the neighbor contributing maximal flow).

Consequently, $P$ forms a valid, continuous reasoning path in the Knowledge Graph starting at the source entity and ending at the answer candidate.
\end{proof}

\section{Related Work}
\label{sec:related_work}

We position our work within the broader landscape of Knowledge Graph Question Answering (KGQA), categorized into subgraph retrieval, LLM-integrated frameworks, and agentic reasoning.

\textbf{Subgraph Retrieval and Reasoning.}
Early neural approaches to KGQA formulated reasoning as a path traversal or subgraph retrieval problem. Methods like \textbf{GraftNet} \cite{sun2018open} and \textbf{PullNet} \cite{sun2019pullnet} employ a \textit{Retrieve-then-Read} paradigm, identifying a discrete subgraph via heuristic or learned retrieval before applying a GNN reader. However, the discrete selection of nodes breaks end-to-end differentiability, preventing the retriever from adapting to downstream reasoning errors. While \textbf{NSM} (Neural State Machine) \cite{he2021improving} introduced a differentiable instruction signal, it relies on fixed node embeddings, causing it to fail when intermediate \textit{bridge} nodes lack lexical overlap with the query (the \textit{Semantic Gap}). \textbf{EmbedKGQA} \cite{saxena-etal-2020-improving} mitigates this via latent space matching but lacks the explicit multi-hop traversal required for complex compositional queries.
\textit{Our Contribution:} RSF-GLLM addresses the non-differentiability of retrieval via continuous soft-flow propagation and resolves the Semantic Gap using a novel Dynamic Gating Mechanism that prioritizes structural validity over semantic matching when necessary.

\textbf{LLM-KG Integration.}
Recent paradigms leverage the generative power of LLMs to interpret graph structures. Frameworks like \textbf{RoG} (Reasoning on Graphs) \cite{luo2024reasoning} and \textbf{GNN-RAG} generate reasoning paths as planning steps or retrieve graph context to augment generation. While effective, these methods face two critical limitations: (1) \textbf{Computational Cost:} RoG requires fine-tuning large base models (e.g., LLaMA-2-7B), imposing significant memory and training overheads. (2) \textbf{Faithfulness:} As noted in recent studies \cite{gao2023retrieval, li2024chain}, retrieval-augmented LLMs are prone to \textit{reasoning shortcuts}, ignoring retrieved structures in favor of parametric memory.
\textit{Our Contribution:} We propose a decoupled architecture where reasoning is handled by a lightweight ($\sim$40ms) GNN module. This avoids the cost of LLM fine-tuning for reasoning and ensures grounding by conditioning the frozen LLM strictly on extracted paths.

\textbf{Agentic and Chain-of-Thought Reasoning.}
The state-of-the-art in complex reasoning involves treating LLMs as autonomous agents. Methods like \textbf{ToG} (Think-on-Graph) \cite{sun2024think} and \textbf{Graph of Thoughts} \cite{besta2024graph} perform iterative beam search or Monte Carlo Tree Search (MCTS) over the KG, invoking the LLM at every hop to evaluate candidates. While achieving high accuracy, these \textit{agentic} approaches are prohibitively slow, often requiring 10--50+ LLM calls per query \cite{sun2024think, besta2024graph}.
\textit{Our Contribution:} RSF-GLLM achieves comparable multi-hop reasoning depth with a \textit{single} LLM call. By offloading the iterative search to our Recurrent Soft-Flow module, we reduce inference latency by orders of magnitude compared to agentic baselines while maintaining theoretical guarantees on path sparsity and validity.

\section{Baselines}\label{sec:baslines}
We evaluate our proposed method against five categories of state-of-the-art baselines, ranging from pure parametric reasoning to specialized graph-agent frameworks.

\begin{enumerate}
    \item \textbf{Embedding-based Methods:} This category includes traditional neural approaches that learn to propagate signals or match questions in a latent embedding space. We compare against \textit{KV-Mem}, which uses key-value memory networks for reading documents; \textit{EmbedKGQA}, which leverages knowledge graph embeddings for link prediction; \textit{NSM} (Neural State Machine), which simulates sequential reasoning steps; and \textit{TransferNet}, which utilizes a differentiable framework to propagate attention scores across graph relations.
    
    \item \textbf{Graph Retrieval Methods:} These frameworks focus on identifying question-relevant subgraphs or using neural traversals to narrow the search space. This category includes \textit{GraftNet}, which uses static embedding matching; \textit{PullNet}, which iteratively retrieves relevant subgraphs; \textit{SR+NSM}, a stepwise neural reasoner with subgraph retrieval; \textit{ReaRev}, which adaptively revises reasoning instructions based on graph feedback; and \textit{UniKGQA}, a unified architecture for retrieval and reasoning.
    
    \item \textbf{LLM Reasoning:} These methods evaluate the parametric knowledge of Large Language Models (LLMs) without providing external Knowledge Graph (KG) context. We evaluate \textit{Qwen3-8B} and \textit{LLaMA3.1-8B} using zero-shot prompting to establish a baseline for pure LLM reasoning capability.
    
    \item \textbf{LLM + KG Integration:} These methods leverage the reasoning power of LLMs by augmenting their context with retrieved KG facts. We compare against \textit{KD-CoT}, an interactive framework for faithful reasoning; \textit{DECAF}, which jointly decodes answers and logical forms using Fusion-in-Decoder; \textit{RoG}, which generates grounded reasoning plans; \textit{G-Retriever}, a retrieval-augmented generation method for graphs; and \textit{GNN-RAG}, which uses Graph Neural Networks to retrieve paths for LLM reasoning.
    
    \item \textbf{Agentic Search Frameworks:} The most recent frontier treats the LLM as an autonomous agent that explores the KG through sequential decision-making. This includes \textit{EffiQA}, which employs strategic multi-model collaboration for efficient QA, and \textit{FD-PORT}, which utilizes Monte Carlo Tree Search (MCTS) and flow-guided optimization to navigate complex multi-hop reasoning paths.
\end{enumerate}

\section{Dynamic Gating Analysis}
\label{sec:lambda_analysis}

To empirically validate the semantic gap bridging hypothesis (Theorem~\ref{thm:expressivity}), we analyze the evolution of the dynamic gating weight $\lambda^{(t)}$ across reasoning hops. Recall that $\lambda^{(t)} \to 0$ indicates structure-focused propagation (traversing via graph topology), while $\lambda^{(t)} \to 1$ indicates content-focused selection (matching node semantics to the query).

Figure~\ref{fig:lambda_evolution} presents the mean $\lambda^{(t)}$ values aggregated across all 4-hop questions in the CWQ test set. The results reveal an interesting pattern that validates our theoretical framework.

\begin{figure}[h]
    \centering
    \includegraphics[width=0.6\textwidth]{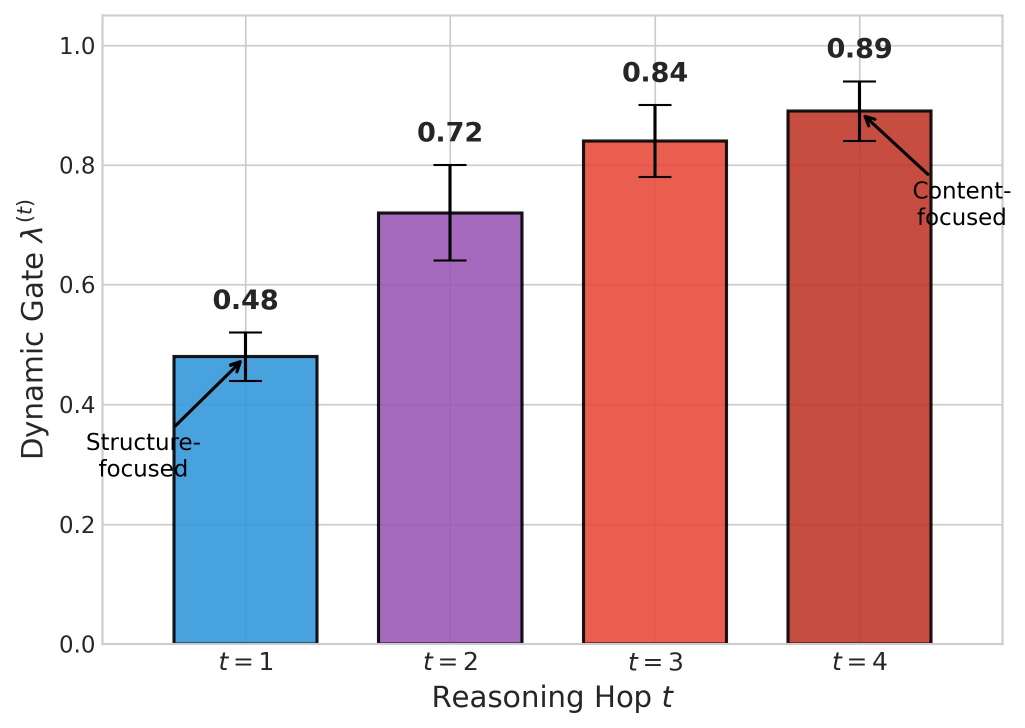}
    \caption{Evolution of dynamic gating weight $\lambda^{(t)}$ across reasoning hops on CWQ 4-hop questions. Error bars indicate standard deviation. The monotonic increase from structure-focused ($\lambda \approx 0.48$) to content-focused ($\lambda \approx 0.89$) validates the semantic gap bridging hypothesis.}
    \label{fig:lambda_evolution}
\end{figure}

\textbf{Interpretation.} The learned gating schedule demonstrates an adaptive transition from structure-focused to content-focused reasoning. At earlier hops, $\lambda$ remains low ($\approx 0.48$), indicating that the model prioritizes graph topology over semantic matching. This is precisely the behavior required to bridge the semantic gap: when traversing from a topic entity like ``Inception'' to an intermediate node like ``Christopher Nolan,'' lexical similarity to query terms such as ``awards'' would be misleading, so the model relies on valid relation types to guide propagation. As reasoning progresses, $\lambda$ increases steadily, reflecting the model's growing confidence in using content signals for disambiguation---particularly important in fan-out scenarios where multiple neighbors share the same valid relation. At later hops ($\lambda \approx 0.89$), content matching dominates, ensuring the final answer aligns semantically with the original query intent. This monotonic increase emerges naturally from end-to-end training without explicit supervision on $\lambda$ values, demonstrating that RSF learns to adaptively balance structure and semantics based on the reasoning stage.

We also remark that individual case by case analysis reveals that we notice samples with $\lambda$ $<$ 0.5 in the final hop, notably when the answer node itself is a generic entity (eg.\ ``National Language''). 

\textbf{CWQ Gate Ablation.} To isolate the dynamic gate's effect on the harder CWQ benchmark, we trained RSF from scratch without the dynamic gate ($\lambda \equiv 1$). Table~\ref{tab:cwq_gate_ablation} reports the results. The gate provides a consistent benefit across all retrieval metrics, with the largest impact on H@1 ($-4.0$pt), substantially larger than the $-2.4$pt H@10 drop observed on WebQSP (Table~\ref{tab:ablation}). This confirms that CWQ's more complex compositional queries expose the gate's value more clearly. We note that CWQ does not provide explicit hop-depth annotations---questions are categorized by composition type, not reasoning depth---so a hop-stratified breakdown is not available from the dataset.

\begin{table}[h]
    \centering
    \caption{Dynamic gate ablation on CWQ retrieval. Removing the gate ($\lambda \equiv 1$) causes a larger H@1 drop ($-4.0$pt) than on WebQSP ($-2.4$pt H@10, Table~\ref{tab:ablation}), confirming the gate's value scales with query complexity.}
    \label{tab:cwq_gate_ablation}
    \small
    \begin{tabular}{l|ccc}
        \toprule
        \textbf{Model} & \textbf{H@1} & \textbf{H@5} & \textbf{H@10} \\
        \midrule
        RSF (with gate)       & 51.3 & 67.7 & 72.1 \\
        RSF (w/o gate, $\lambda{=}1$) & 47.3 & 64.6 & 69.8 \\
        $\Delta$              & $-4.0$ & $-3.1$ & $-2.3$ \\
        \bottomrule
    \end{tabular}
\end{table}

\section{Qualitative Analysis of Reasoning Chains}
\label{sec:qualitative}

To provide deeper insight into RSF-GLLM's reasoning behavior, we present representative examples of successful and failed predictions from the CWQ and WebQSP test sets. These examples illustrate the model's ability to perform complex multi-hop reasoning while also revealing systematic failure modes.

\subsection{Successful Multi-Hop Reasoning}

The following examples demonstrate RSF-GLLM's ability to correctly navigate 3--4 hop reasoning chains, including cases requiring traversal through semantically dissimilar intermediate nodes.

\begin{examplebox}{Example 1: Geographic $\rightarrow$ Political Inference (CWQ, 4-hop)}
\textbf{Question:} ``What type of government is used in the country with Northern District?''

\textbf{Topic Entity:} Northern District \quad \textbf{Ground Truth:} Parliamentary System

\textbf{Prediction:} Parliamentary System \quad \textbf{F1:} 1.00

\begin{pathbox}
Northern District $\xrightarrow{\texttt{administrative\_division.country}}$ Israel 
  $\xrightarrow{\texttt{governmental\_jurisdiction.governing\_officials}}$ [position\_m.0j\_m3\_b]
  $\xrightarrow{\texttt{government\_position\_held.jurisdiction\_of\_office}}$ Israel
  $\xrightarrow{\texttt{location.country.form\_of\_government}}$ Parliamentary System
\end{pathbox}

\textbf{Analysis:} This example demonstrates clean 4-hop reasoning through geographic and political knowledge. Starting from an obscure administrative district, the model correctly identifies Israel as the parent country, navigates through government official positions (CVT nodes), and extracts the governmental structure. 
\end{examplebox}

\begin{examplebox}{Example 2: Multi-Answer Language Retrieval (CWQ, 4-hop)}
\textbf{Question:} ``The people from the country that contains Nord-Ouest Department speak what languages today?''

\textbf{Topic Entity:} Nord-Ouest Department \quad \textbf{Ground Truth:} Haitian Creole $|$ French

\textbf{Prediction:} French, Haitian Creole \quad \textbf{F1:} 1.00

\begin{pathbox}
Path 1: Nord-Ouest Department $\xrightarrow{\texttt{country}}$ Haiti 
  $\xrightarrow{\texttt{governing\_officials}}$ [m.010g48js]
  $\xrightarrow{\texttt{jurisdiction\_of\_office}}$ Haiti
  $\xrightarrow{\texttt{official\_language}}$ French

Path 2: Nord-Ouest Department $\xrightarrow{\texttt{country}}$ Haiti 
  $\xrightarrow{\texttt{governing\_officials}}$ [m.010g48js]
  $\xrightarrow{\texttt{jurisdiction\_of\_office}}$ Haiti
  $\xrightarrow{\texttt{official\_language}}$ Haitian Creole
\end{pathbox}

\textbf{Analysis:} This example showcases RSF's ability to retrieve \textit{multiple correct answers} through parallel reasoning paths. Both official languages of Haiti are successfully identified via paths that share the first three hops but diverge at the final relation. The flow sparsity regularization allows the model to maintain multiple high-probability candidates when the question semantics warrant it, rather than collapsing to a single answer prematurely.
\end{examplebox}

\begin{examplebox}{Example 3: Historical Multi-Answer Question (WebQSP, 3-hop)}
\textbf{Question:} ``What else did Ben Franklin invent?''

\textbf{Topic Entity:} Benjamin Franklin \quad \textbf{Ground Truth:} Lightning rod $|$ Glass harmonica $|$ Bifocals $|$ Franklin stove

\textbf{Prediction:} Franklin stove, Glass harmonica, Bifocals, Lightning rod \quad \textbf{F1:} 1.00

\begin{pathbox}
Benjamin Franklin $\xrightarrow{\texttt{image.appears\_in\_topic}}$ Benjamin Franklin
  $\xrightarrow{\texttt{image.appears\_in\_topic}}$ Benjamin Franklin
  $\xrightarrow{\texttt{law.inventor.inventions}}$ [All 4 inventions retrieved]
\end{pathbox}

\textbf{Analysis:} Despite the seemingly redundant intermediate hops through image relations, RSF correctly identifies all four inventions. This demonstrates robustness to noisy paths in the KG---the model learns that certain relation types (e.g., \texttt{image.appears\_in\_topic}) act as identity connectors and successfully propagates flow through them to reach the target \texttt{inventions} relation.
\end{examplebox}

\begin{examplebox}{Example 4: Political Succession (WebQSP, 3-hop)}
\textbf{Question:} ``Who was vice president after Kennedy died?''

\textbf{Topic Entity:} John F. Kennedy \quad \textbf{Ground Truth:} Lyndon B. Johnson

\textbf{Prediction:} Lyndon B. Johnson \quad \textbf{F1:} 1.00

\begin{pathbox}
John F. Kennedy $\xrightarrow{\texttt{common.topic.webpage}}$ [m.0b48xzc]
  $\xrightarrow{\texttt{common.webpage.topic}}$ John F. Kennedy
  $\xrightarrow{\texttt{government.us\_president.vice\_president}}$ Lyndon B. Johnson
\end{pathbox}

\textbf{Analysis:} The question contains implicit temporal reasoning (``after Kennedy died''), which the model handles by recognizing that JFK's vice president at the time of death would succeed him. The intermediate webpage node serves as a structural bridge, and the final hop directly extracts the vice-presidential relationship.
\end{examplebox}

\begin{examplebox}{Example 5: Implicit Religious Context (CWQ, 4-hop)}
\textbf{Question:} ``What is the predominant religion where the leader is Ovadia Yosef?''

\textbf{Topic Entity:} Ovadia Yosef \quad \textbf{Ground Truth:} Judaism

\textbf{Prediction:} Judaism \quad \textbf{F1:} 1.00

\begin{pathbox}
Ovadia Yosef $\xrightarrow{\texttt{STAY}}$ Ovadia Yosef
  $\xrightarrow{\texttt{people.person.religion}}$ Judaism
  $\xrightarrow{\texttt{religion.founding\_figures}}$ Abraham
  $\xrightarrow{\texttt{religion.founding\_figure.religion\_founded}}$ Judaism
\end{pathbox}

\textbf{Analysis:} This example demonstrates handling of implicit constraints. The phrase ``where the leader is'' does not explicitly mention religious leadership, yet the model correctly infers that Ovadia Yosef's domain of leadership is religious. The reasoning path exhibits a cyclic structure (Judaism $\rightarrow$ Abraham $\rightarrow$ Judaism), showing that RSF can handle re-entrant paths as discussed in Section 2.2.
\end{examplebox}

\subsection{Failure Mode Analysis}

We identify three systematic failure modes through analysis of incorrect predictions. Understanding these limitations is crucial for guiding future improvements.

\begin{examplebox}{Failure Mode 1: Answer Entity Not in Subgraph (WebQSP)}
\textbf{Question:} ``Who plays Ken Barlow in Coronation Street?''

\textbf{Topic Entity:} Coronation Street \quad \textbf{Ground Truth:} William Roache

\textbf{Prediction:} \textit{``The provided reasoning paths do not contain information about the actor...''} \quad \textbf{F1:} 0.00

\textbf{Subgraph Reachability:} No --- Answer entity not present in extracted subgraph.

\begin{pathbox}
Path 1: Coronation Street $\xrightarrow{\texttt{program\_creator}}$ Tony Warren 
  $\xrightarrow{\texttt{film.actor.film}}$ [m.0h0\_mvx] $\xrightarrow{\texttt{performance.actor}}$ Tony Warren

Path 8: Coronation Street $\xrightarrow{\texttt{program\_creator}}$ Tony Warren
  $\xrightarrow{\texttt{fictional\_characters\_created}}$ Ken Barlow $\xrightarrow{\texttt{STAY}}$ Ken Barlow
\end{pathbox}

\textbf{Analysis:} The answer entity \textbf{William Roache} is not present in the extracted subgraph. While the character ``Ken Barlow'' appears in Path 8, the actor-character relationship lies beyond the $K$-hop extraction radius. 
\end{examplebox}

\begin{examplebox}{Failure Mode 2: LLM Selects Wrong Candidate from Multiple Options (CWQ)}
\textbf{Question:} ``Which man is the leader of the country that uses `Libya, Libya, Libya' as its national anthem?''

\textbf{Topic Entity:} Prime Minister of Libya \quad \textbf{Ground Truth:} Abdullah al-Thani

\textbf{Prediction:} Muammar Gaddafi \quad \textbf{F1:} 0.00

\textbf{Subgraph Reachability:} Yes --- Correct answer is present in reasoning paths.

\begin{pathbox}
Path 1: Prime Minister of Libya $\xrightarrow{\texttt{office\_holders}}$ [m.011n4mp5]
  $\xrightarrow{\texttt{office\_holder}}$ Abdullah al-Thani $\xrightarrow{\texttt{positions\_held}}$ ...

Path 3: Prime Minister of Libya $\xrightarrow{\texttt{office\_holders}}$ [m.07lt4tb]
  $\xrightarrow{\texttt{office\_holder}}$ Muammar Gaddafi $\xrightarrow{\texttt{positions\_held}}$ ...

Path 4: Prime Minister of Libya $\xrightarrow{\texttt{office\_holders}}$ [m.0h89fd8]
  $\xrightarrow{\texttt{office\_holder}}$ Mahmoud Jibril $\xrightarrow{\texttt{positions\_held}}$ ...
\end{pathbox}

\textbf{Analysis:} The reasoning paths correctly contain the ground truth answer (Abdullah al-Thani in Path 1), but also contain multiple other Libyan leaders. The LLM incorrectly selected Gaddafi due to popularity bias in its parametric knowledge. The KG lacks temporal annotations to disambiguate which leader is ``current.''
\end{examplebox}

\begin{examplebox}{Failure Mode 3: Entity Alias Mismatch (CWQ)}
\textbf{Question:} ``What was the name of the team that won the 2008 FIFA Club World Cup Final championship?''

\textbf{Topic Entity:} 2008 FIFA Club World Cup Final \quad \textbf{Ground Truth:} Newton Heath L\&YR F.C.

\textbf{Prediction:} Manchester United F.C. \quad \textbf{F1:} 0.00

\begin{pathbox}
Path 1: 2008 FIFA Club World Cup Final $\xrightarrow{\texttt{sports\_championship\_event.champion}}$ 
  Manchester United F.C. $\xrightarrow{\texttt{STAY}}$ Manchester United F.C. ...

Path 2: 2008 FIFA Club World Cup Final $\xrightarrow{\texttt{champion}}$ Manchester United F.C.
  $\xrightarrow{\texttt{sports\_team.venue}}$ [m.0n4ykf9] $\xrightarrow{\texttt{team}}$ Manchester United F.C.
\end{pathbox}

\textbf{Analysis:} All reasoning paths correctly lead to ``Manchester United F.C.,'' and the LLM correctly outputs this entity. However, the ground truth annotation uses ``Newton Heath L\&YR F.C.''---the historical name of Manchester United from 1878--1902. This is a dataset annotation artifact rather than a model failure; the prediction is factually correct but receives F1 of 0.00 due to string mismatch with an outdated alias. To mitigate such issues in production environments, we propose integrating an alias aware entity linker that normalizes both KG output and ground truth annotations to a unique uuid, ensuring the model is not penalized for nomenclature differences.
\end{examplebox}

\subsection{Summary of Failure Modes}

Table~\ref{tab:failure_modes} summarizes the three identified failure modes with their characteristics.

\begin{table}[h]
    \centering
    \caption{Summary of systematic failure modes identified in qualitative analysis.}
    \label{tab:failure_modes}
    \small
    \begin{tabular}{p{3.5cm}|p{4.5cm}|p{2.5cm}}
        \toprule
        \textbf{Failure Mode} & \textbf{Root Cause} & \textbf{Answer in Paths?} \\
        \midrule
        Answer Not in Subgraph & Answer entity lies outside $K$-hop subgraph & No \\
        \midrule
        Wrong Candidate Selection & LLM popularity bias; no temporal info in KG & Yes \\
        \midrule
        Entity Alias Mismatch & Dataset annotation uses outdated entity name & N/A (correct) \\
        \bottomrule
    \end{tabular}
\end{table}

These qualitative examples validate that RSF-GLLM successfully performs complex multi-hop reasoning across diverse question types, while also revealing that remaining errors often stem from retrieval coverage limitations or LLM-side biases rather than fundamental reasoning failures in the RSF module.

\section{Implementation Details and Reproducibility}
\label{sec:implementation}

We provide comprehensive implementation details to facilitate reproducibility of our results.

\subsection{Compute Resources}

All experiments were conducted on a single NVIDIA A100 GPU with 80GB HBM2e memory. No distributed training or multi-GPU setups were required, demonstrating the efficiency of our decoupled architecture.

\subsection{Stage 1: RSF Module Training}

\textbf{Architecture.} The RSF module consists of: (i) a projection layer mapping Qwen3 embeddings to lower hidden states, (ii) a GRU for query updating, (iii) attention layers for relation scoring  and content bias, and (iv) a gating MLP with a single linear layer followed by sigmoid activation.

\textbf{Embeddings.} Node and relation embeddings are obtained using Qwen3-Embedding. Embeddings are pre-computed and cached for all entities and relations in the Freebase subset used by WebQSP and CWQ. Entity names and relation texts are used as input text.

\textbf{Optimization.} We use AdamW optimizer with learning rate $5 \times 10^{-5}$, weight decay $0.01$, and linear warmup over the first 10\% of training steps. Training runs for 10 epochs with early stopping based on validation Hit@1 (patience = 3 epochs). Batch size is 16 for WebQSP and 8 for CWQ (due to larger subgraphs). Gradient clipping is applied with max norm 1.0.

\textbf{Hyperparameters.} The flow sparsity coefficient $\lambda_1 = 0.1$ was selected via grid search over $\{0.01, 0.05, 0.1, 0.2, 0.5\}$. The number of reasoning hops $T$ is set to 2 for WebQSP and 4 for CWQ, matching the dataset characteristics. The smoothing constant $\epsilon = 10^{-8}$ is used for numerical stability.

\textbf{Subgraph Extraction.} We perform $K$-hop BFS from the topic entity ($K=2$ for WebQSP, $K=4$ for CWQ). 

\subsection{Stage 2: LLM Fine-Tuning}

\textbf{Base Model.} We report results with two answer-generation backbones: \textbf{LLaMA-2-7B} and \textbf{Qwen3-8B}. LLaMA-2-7B is included to enable an apples-to-apples comparison with prior LLM+KG baselines (e.g., RoG, GNN-RAG, GNN-RAG+RA), which use the same backbone. Qwen3-8B is reported as a stronger contemporary alternative of comparable parameter scale. Both backbones are trained and evaluated under identical hyperparameters and the same RSF-extracted reasoning paths; no other component of the pipeline changes between the two runs.

\textbf{Fine-Tuning.} We apply full parameter fine-tuning for both backbones, using a single combined checkpoint evaluated on both WebQSP and CWQ to match the protocol used by the LLaMA-2-7B baselines.

\textbf{Optimization.} We fine-tune using AdamW with learning rate $2 \times 10^{-4}$, cosine learning rate schedule, and batch size 1. Training runs for 3 epochs. Maximum sequence length is 2048 tokens.

\subsection{Evaluation Metrics}

We follow standard KGQA evaluation protocols. \textbf{Hit@1} measures exact match accuracy---whether the top-1 predicted answer matches any gold answer. \textbf{F1} computes token-level overlap between the predicted answer string and gold answers, averaged across all test samples.

%%%%%%%%%%%%%%%%%%%%%%%%%%%%%%%%%%%%%%%%%%%%%%%%%%%%%%%%%%%%%%%%%%%%%%%%%%%%%%%
%%%%%%%%%%%%%%%%%%%%%%%%%%%%%%%%%%%%%%%%%%%%%%%%%%%%%%%%%%%%%%%%%%%%%%%%%%%%%%%

\end{document}